\documentclass{article}

\usepackage[preprint]{neurips_2026}

\usepackage{placeins}
\usepackage[utf8]{inputenc} % allow utf-8 input
\usepackage[T1]{fontenc}    % use 8-bit T1 fonts
\usepackage{hyperref}       % hyperlinks
\usepackage{url}            % simple URL typesetting
\usepackage{booktabs}       % professional-quality tables
\usepackage{amsfonts}       % blackboard math symbols
\usepackage{amssymb}        % additional math symbols
\usepackage{amsmath}        % math environments
\usepackage{nicefrac}       % compact symbols for 1/2, etc.
\usepackage{microtype}      % microtypography
\usepackage{xcolor}         % colors
\usepackage{colortbl}       % cell colors in tables
\definecolor{ourhighlight}{RGB}{230,245,230}
\usepackage{graphicx}       % figures
\usepackage{algorithm}      % algorithm environment
\usepackage{algorithmic}    % algorithmic pseudocode
\usepackage{amsthm}         % theorem environments
\usepackage{enumitem}       % customizable lists
\usepackage{wrapfig}        % inline figures/tables
\usepackage{multirow}       % multirow cells in tables
\usepackage[most]{tcolorbox} % colored boxes for examples

\newcommand{\credit}{\textsc{Credit}}

\newtheorem{theorem}{Theorem}
\newtheorem{proposition}[theorem]{Proposition}
\newtheorem{corollary}[theorem]{Corollary}
\newtheorem{assumption}{Assumption}

\title{From Generic Correlation to Input-Specific Credit in On-Policy Self Distillation}

\author{%
  Guobin Shen$^{1}$ \quad Lei Huang$^{1}$ \quad Xiang Cheng$^{1}$ \quad Chenxiao Zhao$^{1}$ \\[2pt]
  \textbf{Jindong Li$^{2}$ \quad Dongcheng Zhao$^{2}$ \quad Xing Yu$^{1}$}\thanks{Correspondence to: \texttt{yuanshan2@xiaohongshu.com}, \texttt{floyed\_shen@outlook.com}.} \\[4pt]
  $^{1}$Xiaohongshu Inc. \quad $^{2}$Institute of Automation, Chinese Academy of Sciences
}

\begin{document}

\maketitle

\begin{abstract}
  On-policy self-distillation has emerged as a promising paradigm for post-training language models, in which the model conditions on environment feedback to serve as its own teacher, providing dense token-level rewards without external teacher models or step-level annotations. Despite its empirical success, what this reward actually measures and what kind of credit it assigns remain unclear. Under a posterior-compatibility interpretation of feedback conditioning, standard in the implicit-reward literature, we show that the self-distillation token reward is a Bayesian filtering increment whose trajectory sum is exactly the pointwise mutual information between the response and the feedback given the input. This pMI can be raised by input-specific reasoning or by input-generic shortcuts, so we further decompose the teacher log-probability along the input axis. Based on this analysis, we propose \credit{} (\textbf{C}ontrastive \textbf{RE}ward from \textbf{DI}s\textbf{T}illation), which isolates the input-specific component with a batch-contrastive baseline. At the sequence level, \credit{} is a teacher-side surrogate for a contrastive pMI objective that also penalizes responses remaining likely under unrelated inputs. Across coding, scientific reasoning, and tool-use benchmarks on two model families, \credit{} delivers the strongest aggregate performance at negligible additional compute.
\end{abstract}

%% ============================================================
%% 1. INTRODUCTION
%% ============================================================
\section{Introduction}
\label{sec:intro}

Reinforcement learning (RL) has become a dominant paradigm for post-training large language models (LLMs), enabling continued improvement on tasks with verifiable outcomes such as code generation, mathematical reasoning, and tool use~\citep{grpo,deepseek-r1,rlhf}. While RL demonstrably generalizes better than supervised fine-tuning~\citep{sft-mem-rl-gen}, a central bottleneck is \emph{credit assignment}: the model generates hundreds or thousands of tokens, yet receives only a single scalar reward at the end of the sequence. This sparse signal provides no information about which tokens contributed to success and which were irrelevant or harmful, leading to high gradient variance and inefficient learning.

Several lines of work address this bottleneck by providing denser reward signals. Process Reward Models (PRMs) train a separate model to score intermediate reasoning steps~\citep{prm800k,math-shepherd,prime,implicitprm}, but require step-level annotations or extensive Monte Carlo rollouts. On-policy distillation (OPD) uses a stronger teacher model to provide token-level supervision on the student's own trajectories~\citep{gkd,g-opd,opcd-ye}, offering dense on-policy signals but requiring access to a separate, often larger, teacher model whose quality upper-bounds the student.

On-policy Self-Distillation (OPSD) has recently emerged as a compelling alternative that addresses both limitations simultaneously. The key idea is to condition the model on environment feedback, such as ground-truth solutions, test results, or error messages, to form a \emph{self-teacher}, then distill this feedback-informed behavior back into the base policy. Multiple concurrent works have converged on this paradigm from different perspectives: reinforcement learning with rich feedback~\citep{sdpo}, rationalization of privileged information~\citep{opsd,pi-distill}, continual skill acquisition~\citep{sdft}, reasoning compression~\citep{opsdc}, and context internalization~\citep{opcd-ye}. The resulting token-level log-ratio between predictions with and without feedback provides a dense reward signal that is on-policy, requires no external teacher, and leverages the model's own in-context learning capabilities.

Despite this rapid empirical progress, the self-distillation reward remains only partially understood. Three questions remain open: \textit{RQ1}: What quantity does the self-distillation reward measure? \textit{RQ2}: Is this quantity biased toward input-generic correlations with feedback, rather than purely reflecting problem-specific credit for the current input? \textit{RQ3}: If so, can we derive a practical reward correction that reduces this input-generic bias while preserving useful problem-specific signal?

This work takes a step toward answering these questions:
\begin{itemize}[leftmargin=1.5em,itemsep=-1.5pt]
  \item \textit{RQ1.} Under posterior compatibility, the self-distillation token reward is a Bayesian filtering increment whose realized sum equals $\operatorname{pmi}_\pi(y; z \mid x)$, implying an implicit token-level Process Reward Model.
  \item \textit{RQ2.} We show that this signal exhibits an input-generic bias: it can reward both input-specific reasoning and generic response patterns that remain predictive of feedback across inputs.
  \item \textit{RQ3.} We propose \credit{} (\textbf{C}ontrastive \textbf{RE}ward from \textbf{DI}s\textbf{T}illation), a simple batch-contrastive correction that suppresses this input-generic component and improves aggregate performance with little additional compute.
\end{itemize}

%% ============================================================
%% 2. PRELIMINARIES
%% ============================================================
\section{Preliminaries}
\label{sec:prelim}

\begin{figure}[t]
  \centering
  \includegraphics[width=\linewidth]{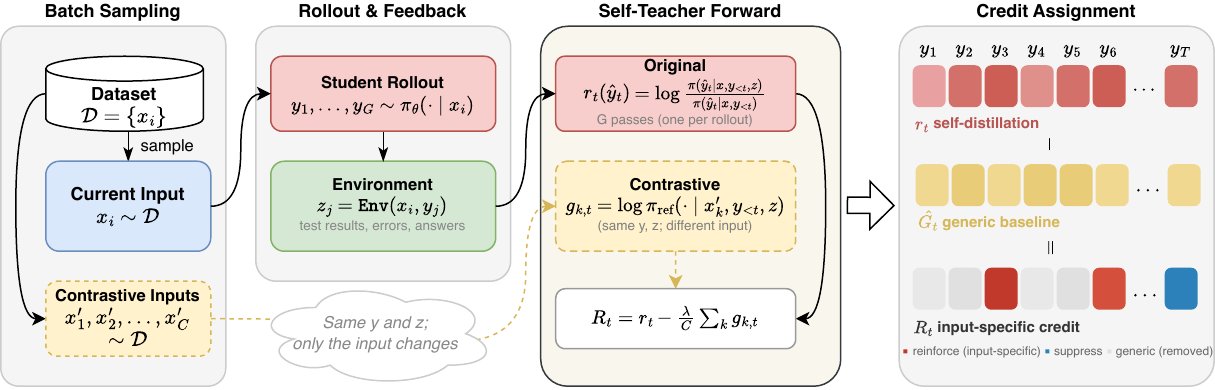}
  \caption{{Overview of \credit.} The self-teacher computes token-level rewards $r_t$ and a generic baseline $\hat{G}_t$ from contrastive inputs sharing the same response and feedback. Subtracting the baseline isolates input-specific credit.}
  \label{fig:overview}
\end{figure}

\paragraph{Problem setup.}
We consider a language model $\pi_\theta$ that generates a response $y = (y_1, \dots, y_T)$ to an input $x$ autoregressively: $\pi_\theta(y \mid x) = \prod_{t=1}^{T} \pi_\theta(y_t \mid x, y_{<t})$. The policy gradient for the RL objective takes the form
\begin{equation}
  \nabla_\theta J(\theta) = \mathbb{E}_{y \sim \pi_\theta(\cdot \mid x)} \left[ \sum_{t=1}^{T} \nabla_\theta \log \pi_\theta(y_t \mid x, y_{<t}) \cdot \hat{A}_t \right],
  \label{eq:pg}
\end{equation}
where $\hat{A}_t$ is the advantage of token $y_t$. Although Eq.~\eqref{eq:pg} updates at token-level granularity, only a scalar outcome reward $R(x, y)$ is typically available (e.g., pass/fail on unit tests). Standard methods such as GRPO estimate $\hat{A}_t \approx R(x,y) - b$ for all $t$, assigning equal credit to every token regardless of its actual contribution---a granularity mismatch that constitutes the credit-assignment bottleneck~\cite{zheng2025group}.

\paragraph{Self-distillation reward.}
\label{sec:log-ratio}
Many verifiable environments provide tokenized \emph{feedback} $z$ beyond the scalar reward, such as ground-truth answers, test cases, or compiler errors. Self-distillation~\citep{sdpo, opsd, opsdc} conditions the model on $z$ to form a self-teacher, resolving the granularity mismatch in Eq.~\eqref{eq:pg} without external teacher models or step-level annotations. At each position $t$, the dense reward is defined for every vocabulary token $\hat{y}_t \in \mathcal{V}$:
\begin{equation}
  r_t(\hat{y}_t) \;\triangleq\; \log \pi_{\text{ref}}(\hat{y}_t \mid x, y_{<t}, z) - \log \pi_\theta(\hat{y}_t \mid x, y_{<t}).
  \label{eq:sd-reward}
\end{equation}
The training objective minimizes a KL-style divergence from student to (stopgrad) teacher at each position, $\sum_t \mathrm{KL}\!\bigl(\pi_\theta(\cdot\mid x,y_{<t})\,\big\|\,\overline{\pi}_{\mathrm{ref}}(\cdot\mid x,y_{<t},z)\bigr)$, whose gradient is a policy-gradient update with per-vocabulary advantages $r_t(\hat{y}_t)$~\citep{sdpo}. In practice, $\pi_{\text{ref}}$ is a lagged copy of $\pi_\theta$ (e.g., via EMA) for stability, and the KL is restricted to the top-$K_\text{v}$ vocabulary tokens for efficiency.

%% ============================================================
%% 3. FROM SELF-DISTILLATION REWARD TO \credit
%% ============================================================
\section{From Self-Distillation Reward to \credit}
\label{sec:method}

\subsection{Self-Distillation Reward as Bayesian Filtering}
\label{sec:implicit-prm}

We begin by asking what the self-distillation reward $r_t(\hat{y}_t)$ (Eq.~\ref{eq:sd-reward}) actually measures. Consider the exact self-distillation setting $\pi_{\text{ref}} = \pi_\theta \equiv \pi$. Two separate forward passes of $\pi$ yield the policy's feedback-conditioned and unconditioned distributions, $\pi(\cdot \mid x, y_{<t}, z)$ and $\pi(\cdot \mid x, y_{<t})$. To interpret their log-ratio as an advantage under a Bayesian posterior, we adopt a posterior-compatibility interpretation that treats these two distributions as conditionals of a common joint model, in the spirit of implicit-reward interpretations used in DPO, PRIME, and related work~\citep{dpo,prime,implicitprm}. This is a modeling interpretation rather than a property guaranteed by training; Appendix~\ref{app:calibration} provides a controlled projected-compatibility check.

\begin{assumption}[Posterior compatibility]
  \label{asm:calibration}
  There exists a trajectory-level joint $P_\pi(y, z \mid x)$ such that, at every position $t$ and for all $\hat{y}_t$ and $z$, the two policy forward passes coincide with its conditionals:
  \[
    \pi(\hat{y}_t \mid x, y_{<t}, z) = P_\pi(\hat{y}_t \mid x, y_{<t}, z), \qquad \pi(\hat{y}_t \mid x, y_{<t}) = P_\pi(\hat{y}_t \mid x, y_{<t}).
  \]
  Equivalently, a single per-position joint $P_\pi(\hat{y}_t, z \mid x, y_{<t})$ exists for every $t$, and these per-position joints are consistent with a common trajectory-level joint $P_\pi(y, z \mid x)$.
\end{assumption}
Under Assumption~\ref{asm:calibration}, applying the chain rule to $P_\pi(\hat{y}_t, z \mid x, y_{<t})$ in two directions yields $\pi(\hat{y}_t \mid x, y_{<t}, z) \cdot P_\pi(z \mid x, y_{<t}) = P_\pi(z \mid x, y_{<t}, \hat{y}_t) \cdot \pi(\hat{y}_t \mid x, y_{<t})$. Taking the log-ratio gives our main identity:

\begin{theorem}[Self-distillation reward as Bayesian filtering]
  \label{thm:implicit-prm}
  Under Assumption~\ref{asm:calibration}, for any position $t$ and any $\hat{y}_t \in \mathcal{V}$,
  \begin{equation}
    r_t(\hat{y}_t) \;=\; \log \frac{\pi(\hat{y}_t \mid x, y_{<t}, z)}{\pi(\hat{y}_t \mid x, y_{<t})}
    \;=\; \log \frac{P_\pi(z \mid x, y_{<t}, \hat{y}_t)}{P_\pi(z \mid x, y_{<t})}
    \;=\; Q_t^{z}(\hat{y}_t,\, x) \;-\; V_{t-1}^{z}(x),
    \label{eq:implicit-prm}
  \end{equation}
\end{theorem}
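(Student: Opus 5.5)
The plan is to establish the three equalities in \eqref{eq:implicit-prm} one at a time. The first is definitional: in the exact self-distillation setting $\pi_{\text{ref}}=\pi_\theta\equiv\pi$, Eq.~\eqref{eq:sd-reward} reads $r_t(\hat y_t)=\log\pi(\hat y_t\mid x,y_{<t},z)-\log\pi(\hat y_t\mid x,y_{<t})$, which is the first expression.

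For the second equality, I will use Assumption~\ref{asm:calibration} to replace both forward passes by conditionals of the per-position joint $P_\pi(\hat y_t,z\mid x,y_{<t})$. I then factor this joint in two ways:
\[
P_\pi(\hat y_t\mid x,y_{<t},z)\,P_\pi(z\mid x,y_{<t})=P_\pi(z\mid x,y_{<t},\hat y_t)\,P_\pi(\hat y_t\mid x,y_{<t}).
\]
Dividing both sides by $P_\pi(\hat y_t\mid x,y_{<t})\,P_\pi(z\mid x,y_{<t})$ and taking logarithms gives the second equality.

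\textbf{Main obstacle: well-definedness.} The division requires $P_\pi(z\mid x,y_{<t})>0$ and $\pi(\hat y_t\mid x,y_{<t})>0$. The latter holds for softmax policies. For the former, if $\pi(\hat y_t\mid x,y_{<t},z)>0$ for some $\hat y_t$, Bayes forces $P_\pi(z\mid x,y_{<t})>0$; for a full-support softmax teacher this is automatic, and I would state it explicitly. I also need the per-position joint to be consistent with the trajectory-level joint, so that $P_\pi(z\mid x,y_{<t},\hat y_t)$ means the same object as the trajectory conditional $P_\pi(z\mid x,y_{\le t})$ when $\hat y_t=y_t$. This is exactly what the ``equivalently'' clause of the assumption provides.

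For the third equality, I will define the feedback-indexed value functions as log posterior predictive probabilities of the feedback:
\[
Q_t^{z}(\hat y_t,x)\triangleq\log P_\pi(z\mid x,y_{<t},\hat y_t),\qquad V_{t-1}^{z}(x)\triangleq\log P_\pi(z\mid x,y_{<t}),
\]
with $V_0^z(x)=\log P_\pi(z\mid x)$. The difference is then immediate. The consistency condition gives $V_t^z(x)=Q_t^z(y_t,x)$ along the realized trajectory, i.e.\ $V_t^z$ is the filter state after observing $y_t$. This justifies the ``Bayesian filtering increment'' reading and sets up the telescoping to $\operatorname{pmi}_\pi(y;z\mid x)=\log\frac{P_\pi(z\mid x,y)}{P_\pi(z\mid x)}$ used later.
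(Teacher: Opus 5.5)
Your proposal is correct and follows the paper's proof essentially step for step. Like the paper, you use Assumption~\ref{asm:calibration} to identify both forward passes with conditionals of the per-position joint, factor that joint in two directions, divide by $\pi(\hat y_t\mid x,y_{<t})\,P_\pi(z\mid x,y_{<t})$, take logs, and read off $Q_t^z - V_{t-1}^z$ from the definitions; your explicit treatment of positivity and of consistency with the trajectory-level joint is somewhat more careful than the paper, which only asserts positivity ``with respect to the joint's support.''
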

\vspace{-0.5em}
where $Q_t^{z}(\hat{y}_t, x) \triangleq \log P_\pi(z \mid x, y_{<t}, \hat{y}_t)$ and $V_t^{z}(x) \triangleq \log P_\pi(z \mid x, y_{\leq t})$ are the action-value and state-value for predicting the feedback $z$ (we keep $x$ explicit and suppress the shared generated-prefix $y_{<t}$ in the notation). We borrow the $Q/V$ notation by analogy with control-as-inference~\citep{control-inference}; here $Q_t^z$ is the one-step conditional log-likelihood of $z$ given $\hat{y}_t$, not an integrated future return.

\paragraph{Counterfactual action vs.\ factual filtering increment.}
$r_t(\hat{y}_t)$ is defined for every candidate next token $\hat{y}_t \in \mathcal{V}$ and is used as the vocabulary-level advantage in the reverse-KL gradient: a \emph{counterfactual action value} asking how $\log P_\pi(z)$ would shift if $\hat{y}_t$ were selected. Along an actual rollout, only the sampled token $y_t$ is realized, and $r_t(y_t)$ reduces to the \emph{filtering increment} $\Delta V_t(x) \triangleq V_t^{z}(x) - V_{t-1}^{z}(x)$; only these realized increments telescope. We write $\hat{y}_t$ for counterfactual quantities and $y_t$ for realized ones throughout. For a fixed observed feedback value $z$, the prior expectation of $r_t$ is non-positive while the posterior expectation is non-negative, so reverse-KL distillation shifts mass away from tokens that make $z$ less predictable on average and toward posterior-supported tokens that make it more predictable. Summing realized increments along a trajectory gives the second consequence:

\begin{corollary}[Trajectory reward equals pointwise mutual information]
  \label{cor:trajectory-pmi}
  Under Assumption~\ref{asm:calibration}, for any realized trajectory $y$,
  \begin{equation}
    \sum_{t=1}^{T} r_t(y_t) \;=\; \log \frac{P_\pi(z \mid x, y)}{P_\pi(z \mid x)} \;=\; \operatorname{pmi}_\pi(y;\, z \mid x).
    \label{eq:trajectory-pmi}
  \end{equation}
\end{corollary}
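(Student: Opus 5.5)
The plan is to apply Theorem~\ref{thm:implicit-prm} at each position along the realized trajectory and telescope. For the sampled token $\hat{y}_t = y_t$, the prefix $(y_{<t}, y_t)$ is exactly $y_{\le t}$. The action value therefore collapses to the next state value:
\[
Q_t^{z}(y_t, x) = \log P_\pi(z \mid x, y_{\le t}) = V_t^{z}(x).
\]
Hence $r_t(y_t) = V_t^{z}(x) - V_{t-1}^{z}(x)$, the filtering increment $\Delta V_t(x)$ noted after the theorem. Here I rely on Assumption~\ref{asm:calibration} supplying a single trajectory-level joint $P_\pi(y,z\mid x)$. That ensures $V_t^z$, computed at step $t$ as a conditional of the position-$t$ joint, is the same quantity that appears as $V_{t}^z$ when step $t+1$ is analyzed.

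Summing over $t=1,\dots,T$, the intermediate values cancel. This leaves $\sum_t r_t(y_t) = V_T^{z}(x) - V_0^{z}(x)$, with the boundary terms
\[
V_T^z(x) = \log P_\pi(z\mid x, y), \qquad V_0^z(x) = \log P_\pi(z \mid x),
\]
the latter because the prefix $y_{\le 0}$ is empty. Their difference is by definition $\log \frac{P_\pi(z\mid x,y)}{P_\pi(z\mid x)}$. Bayes' rule on the common joint rewrites this as $\log \frac{P_\pi(y,z\mid x)}{P_\pi(y\mid x)P_\pi(z\mid x)}$, the conditional pointwise mutual information $\operatorname{pmi}_\pi(y; z\mid x)$.

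As a cross-check I would also verify the statement directly, without Theorem~\ref{thm:implicit-prm}. The sum $\sum_t r_t(y_t)$ equals $\log \prod_t \pi(y_t\mid x,y_{<t},z) - \log \prod_t \pi(y_t\mid x,y_{<t})$. By the assumption and the autoregressive chain rule, this is $\log P_\pi(y\mid x,z) - \log P_\pi(y\mid x)$, which is again the pmi.

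The only delicate step is the consistency claim: the per-position joints must be marginals and conditionals of one trajectory-level joint. Without this, the $V_t^z$ produced at step $t$ need not match the $V_{t}^z$ consumed at step $t+1$, and the sum would not telescope. Assumption~\ref{asm:calibration} states this consistency explicitly, so I would invoke it at that point. I would also note that the boundary value $P_\pi(z\mid x)$ must be positive for the logarithm to be defined, which is implicit in the theorem's log-ratios.
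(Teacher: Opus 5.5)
Your proposal is correct and follows the paper's proof essentially verbatim. You specialize Theorem~\ref{thm:implicit-prm} to the realized token so that $r_t(y_t)=V_t^{z}(x)-V_{t-1}^{z}(x)$, telescope, and identify the boundary terms $V_0^{z}(x)=\log P_\pi(z\mid x)$ and $V_T^{z}(x)=\log P_\pi(z\mid x,y)$. The paper leaves implicit that the trajectory-level consistency in Assumption~\ref{asm:calibration} is what makes the $V_t^{z}$ produced at step $t$ match the one consumed at step $t+1$, and you state this explicitly. Your direct chain-rule cross-check via $\log P_\pi(y\mid x,z)-\log P_\pi(y\mid x)$ is also a useful addition.
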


Self-distillation thus admits an additive token-wise decomposition of the pointwise mutual information between response $y$ and feedback $z$ given $x$.

Averaging Eq.~\eqref{eq:trajectory-pmi} over the joint $(Y, Z) \mid X$ yields $\mathbb{E}_{(Y,Z) \mid X}\!\bigl[\sum_t r_t(Y_t)\bigr] = I_\pi(Y;\, Z \mid X)$, and by the chain rule $\mathbb{E}_{(Y_t, Z) \mid X, Y_{<t}}\!\bigl[r_t(Y_t)\bigr] = I_\pi(Y_t;\, Z \mid X,\, Y_{<t})$. This complements the fixed-$z$ view above: with $z$ fixed, the prior expectation of $r_t$ is non-positive, whereas averaging over jointly sampled responses and feedback recovers non-negative information content.

In this sense, the reward acts as an implicit token-level reward model for predicting feedback $z$: every vocabulary token receives an advantage equal to its marginal contribution to making $z$ more predictable, without additional models or step-level annotations~\citep{prm800k, math-shepherd}. When $z$ is a binary outcome (pass/fail), $\Delta V_t$ reduces to the change in log-probability of success, recovering potential-based reward shaping~\citep{reward-shaping}. This interpretation is exact in the self-distillation setting $\pi_{\text{ref}} = \pi_\theta$; with a lagged reference model, an additive teacher--student gap appears (Appendix~\ref{app:gap}). Appendix~\ref{app:interventional} develops a compatible interventional extension.

Together, these results answer \textit{RQ1} by characterizing what raw self-distillation measures. \textit{RQ2} then asks whether this signal is specific to the current input, or partly driven by more generic response patterns that would remain predictive of feedback across inputs.

\subsection{Separating Input-Specific Credit from Correlation}
\label{sec:decomposition}

\begin{figure}[t]
  \centering
  \includegraphics[width=\textwidth]{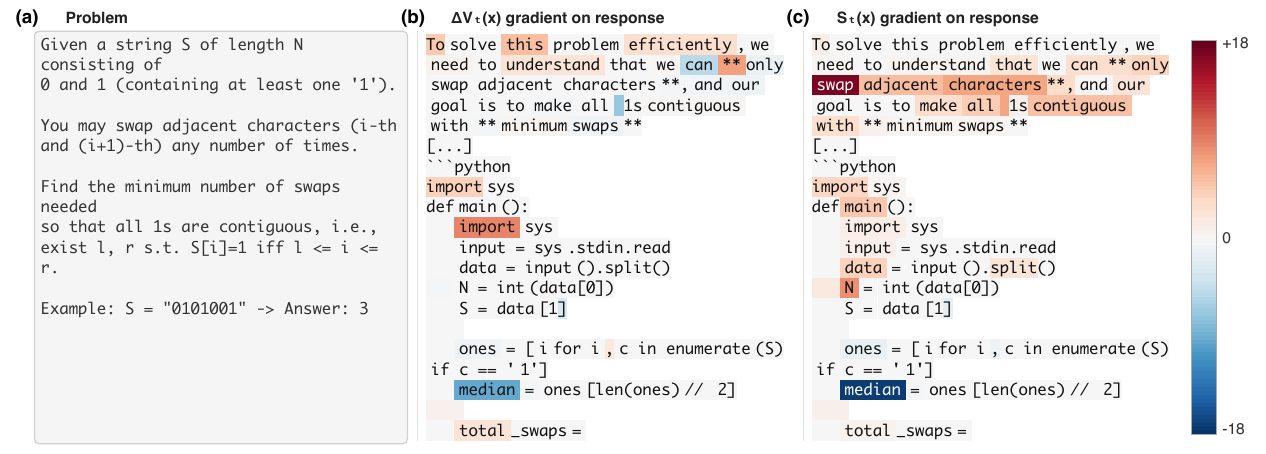}
  \caption{Token-level advantage on a response to problem~(a). (b)~Self-distillation reward $\Delta V_t(x)$: near-uniform, rewarding generic and problem-specific tokens alike. (c)~Input-specific signal $S_t(x)$ after \credit's decomposition: generic tokens are largely suppressed; problem-relevant tokens (red) are reinforced and the wrong algorithm choice is suppressed (blue). Additional examples in Appendix~\ref{app:visualizations}.}
  \label{fig:token-reward}
\end{figure}

The pMI interpretation above identifies what raw self-distillation rewards: informativeness about feedback $z$. But a high $\operatorname{pmi}_\pi(y; z \mid x)$ can arise from two distinct sources: \emph{input-specific reasoning}, where the response's tokens depend on the current input $x$ to predict $z$, or \emph{correlational shortcuts}, where the same tokens would remain predictive of $z$ under many plausible inputs. The raw self-distillation signal cannot distinguish between the two.

Figure~\ref{fig:token-reward} illustrates this on a coding problem. The response to a binary-string problem receives near-uniformly high $\Delta V_t(x)$: generic tokens such as \texttt{import} and filler phrases like ``To solve this problem efficiently'' are rewarded just as strongly as problem-specific algorithm choices. All of them increase $P(z \mid x, y_{\leq t})$, but only the latter requires understanding the input $x$.

The key observation is that input-specific reasoning should depend on $x$: if a token's advantage is equally large regardless of which problem is being solved, it reflects the model's prior rather than problem-specific reasoning. This suggests a simple counterfactual test: replace the current input $x$ with a different problem $x'$ from the same distribution and re-evaluate the advantage. If it persists, the token is input-generic (shortcut); if it vanishes, it is input-specific (reasoning).

Formally, let $\mathcal{D}$ denote the training distribution over inputs. Because the student term $\log \pi_\theta(\hat{y}_t \mid x, y_{<t})$ in Eq.~\eqref{eq:sd-reward} does not depend on the feedback $z$, any correlational shortcut from $z$ must live in the teacher term. We therefore decompose the \emph{teacher} log-probability of $\hat{y}_t$ along the input axis:
\begin{equation}
  \log \pi_{\text{ref}}(\hat{y}_t \mid x,\, y_{<t},\, z)
  \;=\; \underbrace{S_t(\hat{y}_t,\, x)}_{\text{input-specific}} \;+\; \underbrace{G_t(\hat{y}_t)}_{\text{input-generic}},
  \label{eq:decomposition}
\end{equation}
where
\begin{equation}
  G_t(\hat{y}_t) \;\triangleq\; \mathbb{E}_{x' \sim \mathcal{D}}\!\left[\log \pi_{\text{ref}}(\hat{y}_t \mid x',\, y_{<t},\, z)\right], \qquad
  S_t(\hat{y}_t,\, x) \;\triangleq\; \log \pi_{\text{ref}}(\hat{y}_t \mid x,\, y_{<t},\, z) - G_t(\hat{y}_t).
  \label{eq:st-gt}
\end{equation}
$S_t(\hat{y}_t, x)$ is large when the teacher's belief about $\hat{y}_t$ is concentrated on the current input (\emph{input-specific reasoning}) and near zero when it holds for arbitrary inputs given the same $z$ (\emph{shortcut}). $G_t(\hat{y}_t)$ captures the input-generic baseline shared across problems. Substituting Eq.~\eqref{eq:decomposition} into Eq.~\eqref{eq:sd-reward}, the self-distillation reward splits as
\begin{equation}
  r_t(\hat{y}_t;\, x) \;=\; S_t(\hat{y}_t, x) \;+\; G_t(\hat{y}_t) \;-\; \log \pi_\theta(\hat{y}_t \mid x,\, y_{<t}),
  \label{eq:sd-split}
\end{equation}
so that removing the input-generic teacher component $G_t$ leaves an input-specific reward. For the generated token $y_t$, we abbreviate $S_t(x) \equiv S_t(y_t, x)$ when the context is clear.

Figure~\ref{fig:token-reward}(c) shows the result on the same example: $S_t(x)$ selectively reinforces problem-specific vocabulary (``swap'', ``contiguous'') and suppresses the model's wrong algorithmic choice (``median''), while the generic tokens that dominated $\Delta V_t$ are largely suppressed.

\subsection{The \credit{} Algorithm}
\label{sec:credit}

Computing $G_t(\hat{y}_t)$ exactly requires averaging over $\mathcal{D}$, which is intractable. In practice, we estimate the input-generic baseline using $C$ other inputs $x'_1, \dots, x'_C$ sampled from the training batch ($x'_k \neq x$), and control the debiasing strength with a coefficient $\lambda \in [0, 1]$. For every $\hat{y}_t \in \mathcal{V}$:
\begin{equation}
  R_t(\hat{y}_t) \;=\; r_t(\hat{y}_t) \;-\; \frac{\lambda}{C}\sum_{k=1}^{C} \log \pi_{\text{ref}}(\hat{y}_t \mid x'_k, y_{<t}, z),
  \label{eq:credit}
\end{equation}
where $\hat{G}_t(\hat{y}_t) \triangleq \frac{1}{C}\sum_{k=1}^{C} \log \pi_{\text{ref}}(\hat{y}_t \mid x'_k, y_{<t}, z)$ is the estimated generic baseline. Setting $\lambda = 0$ recovers the standard self-distillation reward; increasing $\lambda$ progressively removes the input-generic teacher component. Algorithm~\ref{alg:credit} summarizes the full procedure.
% We find $\lambda \in [0.05, 0.2]$ works well across tasks (Section~\ref{sec:experiments}).

\begin{algorithm}[t]
  \caption{\credit: Contrastive REward from DIsTillation}
  \label{alg:credit}
  \begin{algorithmic}[1]
    \REQUIRE Batch $\{x_i\}_{i=1}^B$, policy $\pi_\theta$, reference model $\pi_{\text{ref}}$, contrastive count $C$, coefficient $\lambda$
    \FOR{each sample $x_i$ in the batch}
    \STATE $y_i \sim \pi_\theta(\cdot \mid x_i)$; \; $z_i \gets \texttt{Env}(x_i, y_i)$ \hfill $\triangleright$ Rollout \& feedback
    \STATE Sample $\{x'_k\}_{k=1}^C$ from the batch ($x'_k \neq x_i$) \hfill $\triangleright$ Contrastive inputs
    \STATE $\mathbf{f}_t \gets \log \pi_{\text{ref}}(\cdot \mid x_i, y_{<t}, z_i)$ for all $t$ \hfill $\triangleright$ Teacher logits (1 pass)
    \STATE $\mathbf{g}_{k,t} \gets \log \pi_{\text{ref}}(\cdot \mid x'_k, y_{<t}, z_i)$ for all $t, k$ \hfill $\triangleright$ Contrastive logits ($C$ passes)
    \STATE $R_t(\hat{y}_t) \gets \mathbf{f}_t[\hat{y}_t] - \frac{\lambda}{C}\textstyle\sum_{k} \mathbf{g}_{k,t}[\hat{y}_t] - \log \pi_\theta(\hat{y}_t \mid x_i, y_{<t})$ \hfill $\triangleright$ Eq.~\eqref{eq:credit}, $\forall\, \hat{y}_t \in \mathcal{V}$
    \ENDFOR
    \STATE Update $\pi_\theta$ via reverse-KL gradient using $\{R_t(\hat{y}_t)\}$ as vocabulary-level advantages
  \end{algorithmic}
\end{algorithm}

\credit{} requires $C$ additional forward passes through $\pi_{\text{ref}}$ per sample beyond standard self-distillation. All passes share the same response $y$ and feedback $z$, differing only in the input prefix, and can be fully parallelized. In practice, the vocabulary-level computation is restricted to the same top-$K_\text{v}$ tokens used by the reverse-KL loss, introducing no additional memory overhead. With $C = 1$ (the default in our experiments), the overhead is a single additional forward pass.

\subsection{Sequence-Level Contrastive pMI}
\label{sec:info-theory}

Corollary~\ref{cor:trajectory-pmi} identifies the quantity whose additive token-wise decomposition raw self-distillation provides: $\operatorname{pmi}_\pi(y; z \mid x)$. Because this pMI can be raised by either input-specific reasoning or input-generic shortcuts (\S\ref{sec:decomposition}), an input-specific objective should subtract the mismatched-input contribution, $\operatorname{pmi}_\pi(y; z \mid x) - \mathbb{E}_{x'}[\operatorname{pmi}_\pi(y; z \mid x')]$. A full-ratio contrastive reward that subtracts the entire self-distillation reward under contrastive $x'$ realizes this exactly when telescoped (Appendix~\ref{app:contrastive-pmi}). \credit{} (Eq.~\ref{eq:credit}) uses a \emph{teacher-side surrogate}, keeping only the contrastive teacher log-probability. In the exact self-distillation setting $\pi_{\text{ref}} = \pi_\theta \equiv \pi$, telescoping its realized rewards gives
\begin{equation}
  \sum_{t=1}^{T} R_t(y_t)
  \;=\; \operatorname{pmi}_\pi(y; z \mid x)
  \;-\; \lambda\, \mathbb{E}_{x'}\!\bigl[\operatorname{pmi}_\pi(y; z \mid x')\bigr]
  \;+\; \lambda\, \mathbb{E}_{x'}\!\bigl[-\log \pi(y \mid x')\bigr],
  \label{eq:credit-seq}
\end{equation}
where the first two terms are the ideal contrastive pMI (up to $\lambda$) and the third is an \emph{anti-genericity bonus}. Since $-\log \pi(y \mid x') \geq 0$ is the surprisal of $y$ under an unrelated input, this term is nonnegative and grows as $y$ becomes less likely under mismatched $x'$: boilerplate responses (high likelihood under random $x'$) receive little of this bonus, whereas input-specific responses receive more, further discouraging generic templates. Dropping the contrastive student term also saves compute, since the student log-probability is already computed by the reverse-KL loss.

\textbf{Token-level prior-contrastive surrogate.}
The sequence-level decomposition has a per-token analog. Define the \emph{prior-contrastive} quantity
\[
  \operatorname{pCMI}^{\mathcal D}(\hat{y}_t; x \mid y_{<t}, z) \;\triangleq\; \log \pi_{\text{ref}}(\hat{y}_t \mid x, y_{<t}, z) - \log \mathbb{E}_{x' \sim \mathcal{D}}[\pi_{\text{ref}}(\hat{y}_t \mid x', y_{<t}, z)],
\]
which contrasts the teacher's belief at the current input $x$ against the marginal teacher under random inputs from the data prior $\mathcal{D}$. This differs from the standard pointwise conditional mutual information~\citep{korbak} in that the marginalization uses the prior $\mathcal D$ rather than the posterior $P(x' \mid y_{<t}, z)$; the prior-contrastive form is what the algorithm actually computes from same-batch negatives, and it directly serves the goal of penalizing tokens that remain likely under randomly sampled inputs. The corresponding sample-based surrogate is $\hat{S}_t(\hat{y}_t, x) \triangleq \log \pi_{\text{ref}}(\hat{y}_t \mid x, y_{<t}, z) - \tfrac{1}{C}\sum_{k} \log \pi_{\text{ref}}(\hat{y}_t \mid x'_k, y_{<t}, z)$.

\begin{proposition}[Prior-contrastive surrogate]
  \label{prop:pcmi-bound}
  For any $C \geq 1$ with $x'_k \stackrel{\mathrm{i.i.d.}}{\sim} \mathcal{D}$, $\;\mathbb{E}_{\{x'_k\}}[\hat{S}_t(\hat{y}_t, x)] \geq \operatorname{pCMI}^{\mathcal D}(\hat{y}_t; x \mid y_{<t}, z)$ by Jensen's inequality, independent of Assumption~\ref{asm:calibration}.
\end{proposition}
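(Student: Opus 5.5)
The plan is to expand $\hat{S}_t$ and push the expectation through term by term. The first term of $\hat{S}_t(\hat{y}_t, x)$, namely $\log \pi_{\text{ref}}(\hat{y}_t \mid x, y_{<t}, z)$, does not depend on the contrastive draws $\{x'_k\}$. It therefore passes through the expectation unchanged and matches the first term of $\operatorname{pCMI}^{\mathcal D}$ exactly. The whole claim then reduces to comparing the subtracted terms, that is, to showing
\[
  \mathbb{E}_{\{x'_k\}}\!\Bigl[\tfrac{1}{C}\textstyle\sum_{k=1}^{C} \log \pi_{\text{ref}}(\hat{y}_t \mid x'_k, y_{<t}, z)\Bigr] \;\le\; \log \mathbb{E}_{x' \sim \mathcal{D}}\!\bigl[\pi_{\text{ref}}(\hat{y}_t \mid x', y_{<t}, z)\bigr].
\]

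Next I use linearity of expectation and the fact that the $x'_k$ are identically distributed according to $\mathcal{D}$. Together these collapse the left-hand side to the single-draw quantity $\mathbb{E}_{x' \sim \mathcal{D}}[\log \pi_{\text{ref}}(\hat{y}_t \mid x', y_{<t}, z)]$, which is exactly $G_t(\hat{y}_t)$ from Eq.~\eqref{eq:st-gt}. Independence is not actually needed here, only identical marginals, and the value of $C$ drops out entirely.

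The key step is Jensen's inequality for the concave function $\log$ applied to the nonnegative random variable $W = \pi_{\text{ref}}(\hat{y}_t \mid x', y_{<t}, z)$ with $x' \sim \mathcal{D}$. This gives $\mathbb{E}[\log W] \le \log \mathbb{E}[W]$. Negating this and adding back the common first term yields $\mathbb{E}[\hat{S}_t] \ge \operatorname{pCMI}^{\mathcal D}$. The argument uses only the definitions and the fact that $\pi_{\text{ref}}$ outputs probabilities, never the joint $P_\pi$. That is why the result holds independently of Assumption~\ref{asm:calibration}.

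I expect the only delicate point to be well-definedness rather than the inequality itself. First, $\mathbb{E}[W] \le 1$ is finite and is positive provided the teacher assigns $\hat{y}_t$ positive mass on a set of positive $\mathcal{D}$-measure. Second, if $W = 0$ with positive probability, then $\mathbb{E}[\log W] = -\infty$ and $\mathbb{E}[\hat{S}_t] = +\infty$, so the inequality holds trivially. With softmax outputs $W > 0$ always, so I will note this case briefly and otherwise treat all quantities as finite.
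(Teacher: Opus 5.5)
Your proposal is correct and matches the paper's proof: linearity of expectation over the identically distributed draws $x'_k$ reduces the $C$-sample average to $\mathbb{E}_{x'\sim\mathcal{D}}[\log \pi_{\text{ref}}(\hat{y}_t \mid x', y_{<t}, z)]$, and Jensen's inequality for the concave $\log$ then gives the bound, never invoking Assumption~\ref{asm:calibration}. Your remark that only identical marginals are needed is the same observation the paper makes, and your treatment of the degenerate case $W = 0$ is a valid refinement that the paper leaves implicit.
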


The inequality above is one-sided, so $\hat{S}_t$ is a Jensen upper bound of $\operatorname{pCMI}^{\mathcal D}$ in expectation, not an unbiased estimator: tokens with high prior-contrastive informativeness are guaranteed to receive correspondingly high $\hat{S}_t$ in expectation, making the contrastive baseline input-aware rather than a generic control variate, but the bound does not by itself imply that low-pCMI shortcut tokens are suppressed. Shortcut suppression rests instead on Eq.~\eqref{eq:credit-seq}'s anti-genericity bonus and on our empirical visualizations (Fig.~\ref{fig:token-reward}, Appendix~\ref{app:visualizations}).

%% ============================================================
%% 4. EXPERIMENTS
%% ============================================================
\section{Experiments}
\label{sec:experiments}
% \vspace{-.5em}

\paragraph{Setup}
\label{sec:setup}
\vspace{-.5em}

We evaluate on LiveCodeBench v6~\cite{livecodebench} (avg@4), where feedback $z$ consists of test input-output pairs and runtime errors; SciKnowEval~\cite{sciknoweval} (Chemistry, Physics, Biology, Materials Science; avg@16), where $z$ is the ground-truth answer; and ToolAlpaca~\cite{toolalpaca} (tool-use; avg@16), where $z$ is the expected tool call. We use Qwen3-8B~\cite{qwen3} and OLMo-3-7B-Instruct~\cite{olmo} as base models, and compare GRPO~\cite{grpo} (scalar outcome reward), SDPO~\cite{sdpo} (\credit{} reduces to SDPO when $\lambda = 0$), and \credit{} ($\lambda = 0.1$, $C = 1$, fixed across all tasks). SDPO and \credit{} share identical training configurations and self-teacher contexts; \credit{} modifies only the reward computation, introducing no additional models or data. All experiments use the verl framework~\cite{verl} on 8 NVIDIA H20 GPUs. Dataset splits, evaluation protocol, and full hyperparameters are in Appendix~\ref{app:hparams}.

\subsection{Main Results}
\label{sec:main-results}
\vspace{-.5em}

We first evaluate on SciKnowEval and ToolAlpaca with both base models (Table~\ref{tab:main}). Self-distillation methods substantially outperform GRPO across all scientific reasoning domains on both models, confirming that dense credit from environment feedback is the primary driver of improvement. \credit{} delivers the strongest aggregate performance on both model families: it improves most clearly on the weaker OLMo model and on domains where SDPO is weaker or noisier, and largely matches SDPO where SDPO is already strong.

\begin{table}[ht]
  \centering
  \caption{Scientific reasoning and tool-use benchmarks (best avg@16 during training). Subscripts are bootstrap 95\% CI half-widths on per-prompt scores. \textbf{Bold}: best; \underline{underline}: second best.}
  \label{tab:main}
  \setlength{\tabcolsep}{1pt}
  \small
  \resizebox{\linewidth}{!}{
    \begin{tabular}{ l |ccccc|c| ccccc|c}
      \toprule
       & \multicolumn{6}{c|}{\textbf{Qwen3-8B}}       & \multicolumn{6}{c}{\textbf{OLMo-3-7B-Instruct}}                                                                                                                                                                                                                        \\
      \cmidrule(lr){2-7} \cmidrule(lr){8-13}
       & Chem.                                        & Phys.                                           & Bio.                                          & Mat.                                         & Tool                                          & \textbf{Avg.}    & Chem. & Phys. & Bio. & Mat. & Tool & \textbf{Avg.} \\
      \midrule
      Base
       & 36.2                                         & 59.2                                            & 25.8                                          & 58.6                                         & 57.5                                          & 47.5
       & 19.9                                         & 37.0                                            & 19.1                                          & 36.5                                         & 39.3                                          & 30.4                                                                  \\
      \;+ GRPO
       & 59.4$_{\scalebox{.7}{$\pm$5.7}}$             & 60.8$_{\scalebox{.7}{$\pm$8.0}}$                & 29.4$_{\scalebox{.7}{$\pm$7.1}}$              & 72.5$_{\scalebox{.7}{$\pm$8.2}}$             & \underline{69.7}$_{\scalebox{.7}{$\pm$10.3}}$ & 58.4
       & 46.4$_{\scalebox{.7}{$\pm$5.0}}$             & 60.3$_{\scalebox{.7}{$\pm$8.4}}$                & 29.3$_{\scalebox{.7}{$\pm$7.6}}$              & 72.7$_{\scalebox{.7}{$\pm$8.5}}$             & \underline{64.4}$_{\scalebox{.7}{$\pm$11.6}}$ & 54.6                                                                  \\
      \;+ SDPO
       & \underline{65.6}$_{\scalebox{.7}{$\pm$5.3}}$ & \textbf{71.8}$_{\scalebox{.7}{$\pm$7.9}}$       & \underline{51.8}$_{\scalebox{.7}{$\pm$11.3}}$ & \underline{72.6}$_{\scalebox{.7}{$\pm$9.1}}$ & 68.0$_{\scalebox{.7}{$\pm$10.7}}$             & \underline{66.0}
       & \underline{66.8}$_{\scalebox{.7}{$\pm$5.8}}$ & \underline{71.4}$_{\scalebox{.7}{$\pm$7.8}}$    & \underline{42.0}$_{\scalebox{.7}{$\pm$11.7}}$ & \underline{77.2}$_{\scalebox{.7}{$\pm$7.9}}$ & 61.5$_{\scalebox{.7}{$\pm$10.8}}$             & \underline{63.8}                                                      \\
      \rowcolor{ourhighlight}
      \;+ \credit
       & \textbf{66.4}$_{\scalebox{.7}{$\pm$5.6}}$    & \underline{71.7}$_{\scalebox{.7}{$\pm$8.0}}$    & \textbf{52.4}$_{\scalebox{.7}{$\pm$11.4}}$    & \textbf{74.2}$_{\scalebox{.7}{$\pm$8.6}}$    & \textbf{70.4}$_{\scalebox{.7}{$\pm$10.2}}$    & \textbf{67.0}
       & \textbf{69.2}$_{\scalebox{.7}{$\pm$5.7}}$    & \textbf{72.3}$_{\scalebox{.7}{$\pm$8.7}}$       & \textbf{54.1}$_{\scalebox{.7}{$\pm$11.4}}$    & \textbf{77.9}$_{\scalebox{.7}{$\pm$7.5}}$    & \textbf{67.4}$_{\scalebox{.7}{$\pm$10.2}}$    & \textbf{68.2}                                                         \\
      \bottomrule
    \end{tabular}
  }
\end{table}

An interesting pattern emerges across domains. Where self-distillation already provides large gains over GRPO (e.g., Biology on Qwen3-8B), \credit{} offers modest further improvement: the dense reward is already sufficiently informative. Conversely, where self-distillation struggles (e.g., tool use on OLMo, where SDPO underperforms GRPO), \credit{} recovers and surpasses both methods, suggesting that contrastive debiasing is most valuable precisely when the raw self-distillation reward contains more noise than signal. This is consistent with our theoretical analysis: simpler feedback structures produce a larger input-generic component $G_t$, which \credit{} removes.

\begin{figure}[ht]
  \centering
  \includegraphics[width=\linewidth]{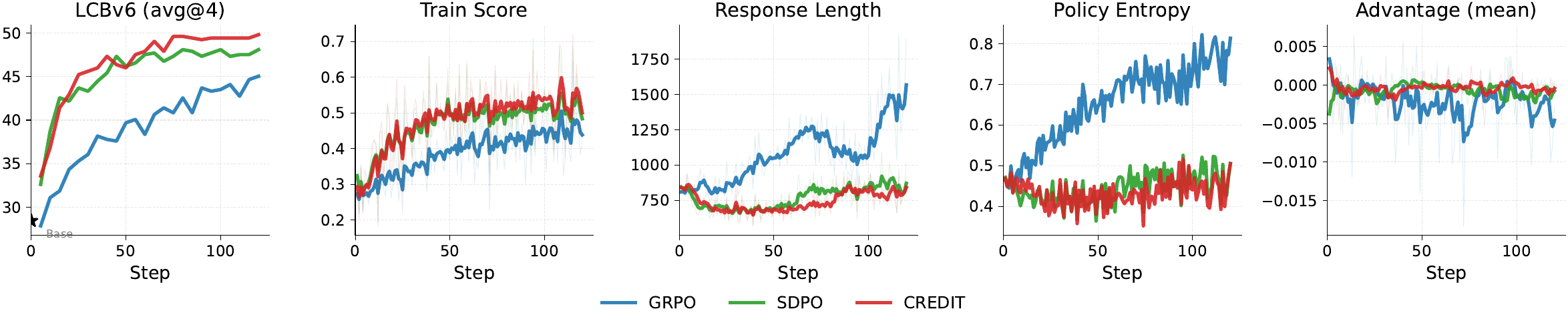}
  \vspace{-2em}
  \caption{LiveCodeBench v6 training dynamics (Qwen3-8B). \credit{} achieves the highest eval score while maintaining shorter responses and more stable entropy and advantages than both SDPO and GRPO.}
  \vspace{-1em}
  \label{fig:lcb-curves}
\end{figure}

We additionally evaluate on LiveCodeBench v6~\cite{livecodebench} with Qwen3-8B, where the environment provides rich structured feedback (test results, runtime errors) rather than a scalar reward. \credit{} again outperforms both baselines (Figure~\ref{fig:lcb-curves}, left), and also converges faster: it reaches 45\% accuracy within roughly 25 training steps, compared to about 40 steps for SDPO and around 120 steps for GRPO. The training dynamics reveal that GRPO's response length more than doubles over training while its entropy steadily rises, indicating increasingly verbose and uncertain generation. In contrast, both self-distillation methods produce shorter, more concise responses with stable entropy. Among the two, \credit{} exhibits more stable advantages throughout training, suggesting that contrastive debiasing yields a better-calibrated reward signal. Beyond the example in Figure~\ref{fig:token-reward}, we provide additional token-level reward visualizations on representative LCB problems in Appendix~\ref{app:visualizations}, where $S_t$ consistently concentrates credit on problem-specific reasoning steps while largely suppressing generic tokens.

\subsection{Ablation Studies}
\label{sec:ablation}
\vspace{-.5em}

We ablate the key hyperparameters of \credit{} on LiveCodeBench v6 with Qwen3-8B, where the rich feedback structure makes differences between configurations most visible.

\begin{wraptable}{r}{0.57\linewidth}
  \vspace{-1.5em}
  \centering
  \small
  \caption{{$C\!=\!1$ suffices}: \credit{} performance is stable across contrastive counts while per-step compute grows with $C$. Time ratios are measured relative to the SDPO baseline ($C\!=\!0$). Subscripts on the LCBv6 column report the absolute percentage-point improvement over the SDPO baseline (not confidence intervals).}
  \label{tab:ablation-c}
  \begin{tabular}{c|ccc}
    \toprule
    $C$          & LCBv6 (\%)                            & Train Score & Time / step         \\
    \midrule
    0 \,({SDPO}) & 48.1                                  & 0.707       & 467s (1.00$\times$) \\
    \midrule
    1            & 49.8$_{\scalebox{.7}{+1.7}}$          & 0.719       & 472s (1.01$\times$) \\
    2            & 49.6$_{\scalebox{.7}{+1.5}}$          & 0.715       & 481s (1.03$\times$) \\
    8            & \textbf{50.6}$_{\scalebox{.7}{+2.5}}$ & 0.719       & 607s (1.30$\times$) \\
    16           & 49.4$_{\scalebox{.7}{+1.3}}$          & 0.719       & 765s (1.64$\times$) \\
    \bottomrule
  \end{tabular}
  \vspace{-.5em}
\end{wraptable}
\textbf{Contrastive count $C$.}
We also ablate $C$, the number of contrastive samples used to estimate $\hat{G}_t$, and find that a single sample already suffices (Table~\ref{tab:ablation-c}). All values of $C \geq 1$ reach the same peak train score and comparable LCB accuracy, while per-step compute grows roughly linearly with $C$, reaching 1.64$\times$ the SDPO baseline at $C = 16$ with no accuracy gain. Larger $C$ does reduce the variance of the $\hat{G}_t$ estimate ($S_t$ std drops from 1.10 to 0.97 as $C$ grows from 1 to 16), but this variance reduction does not translate into better learning. The reason is that $C = 1$ is already an unbiased estimator of the log-space baseline term $\mathbb{E}_{x'}[\log \pi_{\text{ref}}(\hat{y}_t \mid x', y_{<t}, z)]$ under a fixed policy (Proposition~\ref{prop:pcmi-bound}); larger $C$ primarily reduces estimator variance of a quantity that is already averaged over many training steps. The reverse-KL distillation loss further attenuates this variance through two mechanisms specific to its token-level structure: (i)~the top-$K_v$ truncation restricts the loss to the teacher's most likely tokens, where the contrastive log-probability is most stable, and (ii)~the EMA reference model evolves slowly relative to the policy update, so $\hat{G}_t$ enters as a low-frequency baseline rather than a noisy per-update target. We therefore use $C = 1$ throughout. The observed wall-clock overhead over SDPO is only about 1\%, because training time is dominated by autoregressive rollout and the contrastive forward pass shares the response prefix and feedback with the original teacher pass, allowing the two to be batched together at negligible additional cost.

\begin{figure}[ht]
  \centering
  \includegraphics[width=\linewidth]{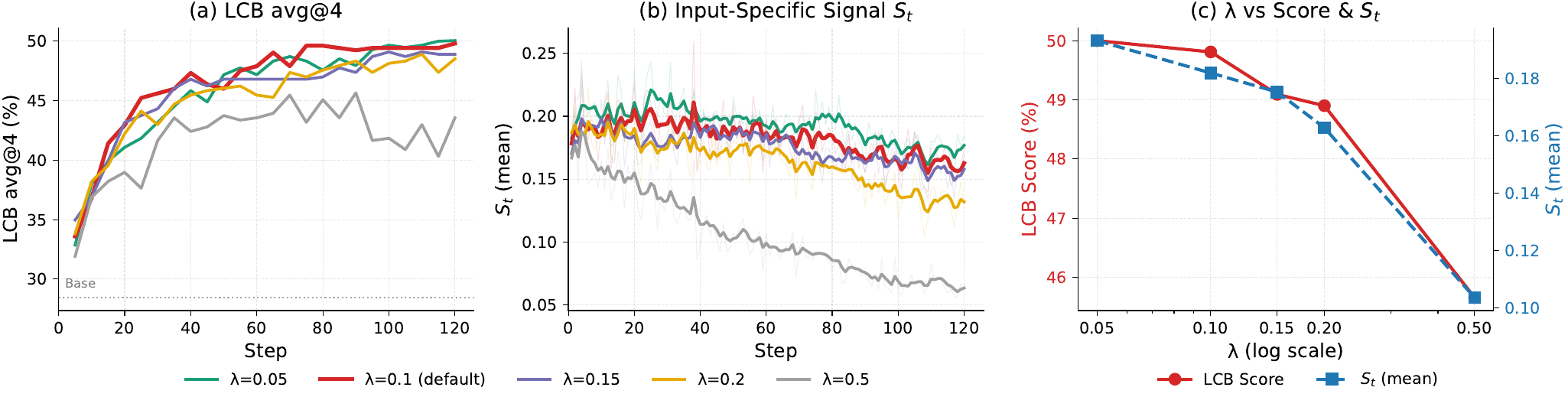}
  \vspace{-2em}
  \caption{{Over-debiasing collapses the input-specific signal.} (a)~LCBv6 score for varying $\lambda$. (b)~$S_t$ over training: larger $\lambda$ causes rapid decay. (c)~Both LCB and $S_t$ decrease with $\lambda$ in lockstep.}
  \label{fig:ablation-lambda}
  \vspace{-1em}
\end{figure}

\textbf{Debiasing strength $\lambda$.}
$\lambda$ controls how aggressively \credit{} removes the input-generic component from the reward. Figure~\ref{fig:ablation-lambda}(a) shows that $\lambda \in [0.05, 0.1]$ performs best, with monotonic degradation at larger values. The mechanism is visible in the input-specific signal $S_t$ (Figure~\ref{fig:ablation-lambda}(b)): all $\lambda$ values start at similar levels ($\sim$0.18), but $S_t$ decays to 0.07 for $\lambda = 0.5$. This creates a feedback loop: excessive debiasing weakens the policy gradient, producing a more generic policy whose $S_t$ shrinks further. Figure~\ref{fig:ablation-lambda}(c) confirms that both LCB score and $S_t$ decrease with $\lambda$ in lockstep. We fix $\lambda = 0.1$ for all other experiments.

\textbf{Self-teacher context.}
Our main experiments use Qwen3-8B without thinking mode (the student response contains no thinking trace). Enabling thinking---i.e., letting the student produce a thinking trace and including that trace in the self-teacher's context---improves all three methods (Figure~\ref{fig:ablation-think-bar}), with \credit{} benefiting the most (+8.8 points). The improvement is consistent across methods, suggesting that richer context gives the self-teacher more information to assign dense credit, an effect that compounds with \credit's contrastive debiasing.

\begin{wrapfigure}{r}{0.35\linewidth}
  \vspace{-1.0em}
  \centering
  \includegraphics[width=\linewidth]{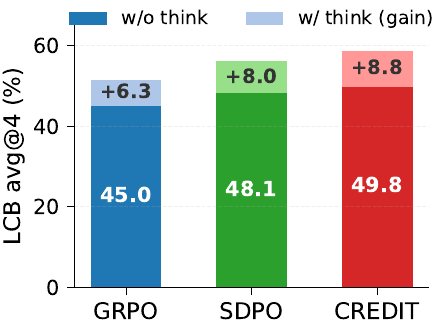}
  \vspace{-1.0em}
  \caption{{Think mode on LCBv6.} Enabling thinking improves all methods; \credit{} gains the most.}
  \label{fig:ablation-think-bar}
  \vspace{-1em}
\end{wrapfigure}
Given the thinking trace, what should the self-teacher context contain? We compare three variants for \credit{} in Figure~\ref{fig:ablation-context}: \emph{think + solution} (the full context), \emph{solution only} (thinking trace removed), and \emph{think only} (solution removed). Solution-only achieves the best stable performance (Figure~\ref{fig:ablation-context}(a)). The think-only variant initially learns fastest, peaking at 57.1\% by step 35, but then collapses to 26.7\%. The train score (Figure~\ref{fig:ablation-context}(b)) mirrors this trajectory, ruling out evaluation noise, and suggests that the thinking trace alone provides a strong but unstable supervision signal. We hypothesize that without the grounding provided by the solution, the self-teacher's predictions become increasingly self-referential, amplifying noise until the policy diverges. The input-specific signal $S_t$ (Figure~\ref{fig:ablation-context}(c)) is highest when both components are present and lowest for think-only, consistent with the solution providing a more stable anchor for credit assignment.

\begin{figure}[ht]
  \centering
  \includegraphics[width=\linewidth]{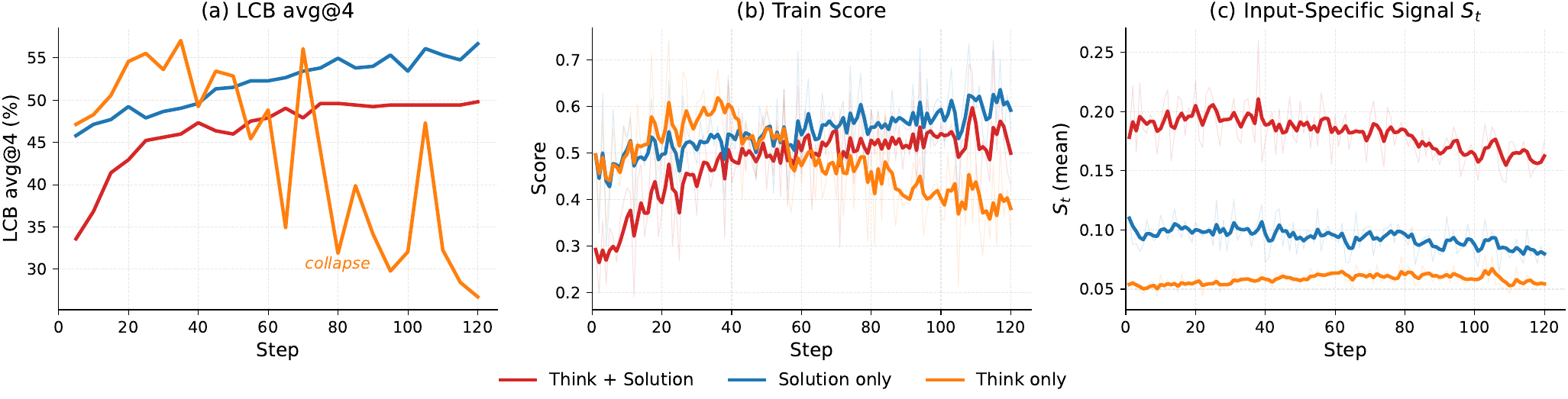}
  \vspace{-1em}
  \caption{\textbf{Self-teacher context ablation (\credit{}, w/ think).} Think-only collapses after an initial peak; solution-only is most stable. Train score (b) confirms the collapse is not an evaluation artifact.}
  \label{fig:ablation-context}
  \vspace{-1em}
\end{figure}

\vspace{-.5em}
\section{Related Work}
\label{sec:related}
\vspace{-.5em}

\textbf{On-policy self-distillation.}
Knowledge distillation~\citep{kd,minillm} trains a student to match a teacher's output distribution; on-policy variants~\citep{gkd} apply dense token-level supervision on the student's own trajectories to avoid distribution mismatch. Earlier self-improvement work such as STaR~\citep{star} and self-rewarding language models~\citep{yuan-self-reward} lets the model learn from its own successful outputs, but operates at the sequence level via filtering or judging rather than at the token level. A recent family of works replaces the external teacher with a \emph{self-teacher} conditioned on privileged information~\citep{vapnik-lupi,snell-context-distill}, such as ground-truth answers, test results, or tool outputs, yielding token-level supervision without an external model: SDPO~\citep{sdpo} for RL with rich environment feedback, OPSD~\citep{opsd} and $\pi$-Distill~\citep{pi-distill} for rationalization of privileged information, OPSDC~\citep{opsdc} for reasoning compression, SDFT~\citep{sdft} for continual skill acquisition, OPCD~\citep{opcd-ye} for context internalization, and G-OPD~\citep{g-opd} for generalizing beyond the teacher via reward scaling.

\credit{} takes a complementary perspective: rather than proposing a new self-distillation variant, we analyze the reward structure common to all of them. Several of these works use or motivate the log-ratio as dense credit, but none decompose the signal along the input axis or identify the input-generic component targeted by \credit{}. More concretely, SDPO~\citep{sdpo} and $\pi$-Distill~\citep{pi-distill} use the self-teacher log-ratio directly without isolating how much of that signal is input-generic; OPSD's~\citep{opsd} token-level clipping stabilizes large per-token log-ratios regardless of input, whereas \credit{} specifically targets tokens whose teacher likelihood remains high under mismatched inputs; and G-OPD~\citep{g-opd} globally rescales the teacher–student ratio, whereas \credit{} changes the reward content by subtracting an input-contrastive baseline. \credit{} modifies only the reward computation and composes with any of these frameworks.

\textbf{Process and implicit rewards.}
Process reward models assign reward to intermediate reasoning steps rather than full sequences: PRM800K~\citep{prm800k} uses human annotations, Math-Shepherd~\citep{math-shepherd} automates annotation via Monte Carlo rollouts, and PRIME~\citep{prime} and Yuan et al.~\citep{implicitprm} derive implicit process rewards from outcome supervision, though building reliable PRMs remains hard~\citep{lessons-prm}. A parallel line of work derives rewards implicit in policy ratios: DPO~\citep{dpo} interprets the policy log-ratio as a reward, and the self-distillation log-ratio admits a similar reading. Our Theorem~\ref{thm:implicit-prm} sharpens this view: the token reward is a Bayesian filtering increment whose trajectory sum is a pointwise mutual information (Corollary~\ref{cor:trajectory-pmi}), and \credit{} refines it by removing the input-generic component that inflates credit for tokens merely correlated with feedback.

\textbf{Credit assignment.}
Classical credit assignment methods such as GAE~\citep{gae} decompose reward along the \emph{temporal} axis (early vs.\ late tokens), and potential-based reward shaping~\citep{reward-shaping} adds shaped rewards without altering the optimal policy. The $S_t$/$G_t$ decomposition in \credit{} instead operates along the \emph{input} axis, separating credit specific to the current input from credit any input would receive. The control-as-inference framework~\citep{control-inference,korbak} motivates the $Q$/$V$ interpretation we use in Theorem~\ref{thm:implicit-prm}: treating policy optimization as posterior inference makes the connection between log-ratios and action-value advantages transparent. Our contrastive baseline, estimated from other inputs in the batch, provides a simple Monte Carlo approximation to the input-generic component $G_t$, which we connect to pointwise conditional mutual information through a one-sided Jensen bound (Proposition~\ref{prop:pcmi-bound}).

\section{Conclusion}
\label{sec:conclusion}
\vspace{-.5em}

We presented \credit{}, a simple reward correction for on-policy self-distillation. Under a posterior-compatibility interpretation of feedback conditioning, the self-distillation token reward is a Bayesian filtering increment whose trajectory sum equals the pointwise mutual information between response and feedback given the input; since pMI can be raised by input-generic shortcuts as well as input-specific reasoning, \credit{} subtracts a batch-contrastive baseline that makes the learning objective a teacher-side surrogate for a contrastive pMI plus an anti-genericity bonus on mismatched-input surprisal, delivering the strongest aggregate performance across coding, scientific reasoning, and tool-use benchmarks at negligible compute overhead. Our experiments are limited to 7--8B models and structured-feedback tasks, and the contrastive baseline assumes sufficient batch diversity and pairs unrelated $x'$ with $z$ in a way that may push the teacher prompt out of distribution; a systematic study of negative-sampling strategies, dynamically adapted $\lambda$, extensions to multi-turn agent trajectories, and applications to other policy-ratio implicit rewards are left to future work.

\bibliography{refs}
\bibliographystyle{unsrt}

%%%%%%%%%%%%%%%%%%%%%%%%%%%%%%%%%%%%%%%%%%%%%%%%%%%%%%%%%%%%

\newpage
\appendix

\begin{center}
  \noindent\rule{\linewidth}{3pt}\\[6pt]
  {\Large\bfseries From Generic Correlation to Input-Specific Credit\\[0.25em] in On-Policy Self Distillation\\[0.4em] Supplementary Material}\\[6pt]
  \noindent\rule{\linewidth}{1pt}
\end{center}
\vspace{1em}

\noindent\textbf{Table of Contents}
\vspace{0.3em}
\hrule
\vspace{0.5em}
\noindent
\hyperref[app:hparams]{\ref{app:hparams}}\hspace{1em}Training Hyperparameters\dotfill\pageref{app:hparams}\\[0.3em]
\hyperref[app:proofs]{\ref{app:proofs}}\hspace{1em}Proofs\dotfill\pageref{app:proofs}\\[0.3em]
\hyperref[app:gap]{\ref{app:gap}}\hspace{1em}Approximation Gap when $\pi_{\text{ref}} \neq \pi_\theta$\dotfill\pageref{app:gap}\\[0.3em]
\hyperref[app:contrastive-pmi]{\ref{app:contrastive-pmi}}\hspace{1em}Sequence-Level Contrastive pMI: Full Derivation\dotfill\pageref{app:contrastive-pmi}\\[0.3em]
\hyperref[app:calibration]{\ref{app:calibration}}\hspace{1em}Empirical Support for Assumption~\ref{asm:calibration} at the Answer Position\dotfill\pageref{app:calibration}\\[0.3em]
\hyperref[app:interventional]{\ref{app:interventional}}\hspace{1em}Toward Interventional Credit\dotfill\pageref{app:interventional}\\[0.3em]
\hyperref[app:visualizations]{\ref{app:visualizations}}\hspace{1em}Additional Token-Level Reward Visualizations\dotfill\pageref{app:visualizations}
\vspace{0.5em}
\hrule
\vspace{1.5em}

%% ============================================================
%% A. HYPERPARAMETERS
%% ============================================================
\section{Training Hyperparameters}
\label{app:hparams}

% \paragraph{Splits and evaluation protocol.}
We follow the data-preparation and evaluation pipeline of SDPO~\cite{sdpo} directly across all three benchmarks, so that relative comparisons between GRPO, SDPO, and \credit{} remain on identical footing. \textbf{LiveCodeBench v6} uses the 131 programming questions released between February and May 2025, with a LeetCode-style public/private test setup: a $50\%$ random subset of LCB's private tests provides in-training feedback $z$ and the full private set is used for validation (avg@4). \textbf{SciKnowEval} uses the Level-3 reasoning subsets for Biology, Chemistry, Materials Science, and Physics, with the SDPO train/test split (avg@16). \textbf{ToolAlpaca} uses its standard 4046/68 train/test split (avg@16). Validation runs every 5 training steps; we use the same selection protocol across all three methods and index training progress by optimization step. SDPO and \credit{} use \emph{identical} self-teacher contexts $(x, y_{<t}, z)$; when environment feedback includes a successful rollout from the current training group (e.g.\ a passing program on LCB), it enters $z$ for both methods.

% \paragraph{Evaluation CIs.}
The bootstrap 95\% CI half-widths reported as subscripts in Table~\ref{tab:main} are computed by resampling evaluation prompts with replacement (2000 resamples). The sampling unit is the prompt, and each prompt's score is the mean of $K$ samples ($K{=}16$ on SciKnowEval and ToolAlpaca, $K{=}4$ on LiveCodeBench); the CI therefore quantifies evaluation-prompt sampling variance in the reported mean. The same evaluation protocol is applied uniformly across GRPO, SDPO, and \credit{}, so relative comparisons between methods are on identical footing.

% \paragraph{Hyperparameters.}
Table~\ref{tab:hparams} lists the full training hyperparameters. All three methods (GRPO, self-distillation, \credit) share the same configuration except for the advantage computation. \credit{} adds only two hyperparameters: the contrastive count $C$ and the debiasing coefficient $\lambda$.

\begin{table}[h]
  \centering
  \caption{Training hyperparameters for LiveCodeBench (LCB) and generalization tasks (SciKnowEval + ToolUse). Self-distillation and \credit{} share identical configurations; \credit{} adds only $C$ and $\lambda$.}
  \label{tab:hparams}
  \setlength{\tabcolsep}{1pt}
  % \small
  \begin{tabular}{lcc}
    \toprule
    \textbf{Hyperparameter}            & \textbf{LCB}                                                                           & \textbf{SciKnowEval / ToolUse} \\
    \midrule
    Optimizer                          & \multicolumn{2}{c}{Adam}                                                                                                \\
    Learning rate                      & \multicolumn{2}{c}{$1 \times 10^{-6}$}                                                                                  \\
    LR warmup steps                    & 0                                                                                      & 10                             \\
    Training epochs                    & 30                                                                                     & 3                              \\
    Batch size (problems)              & 32                                                                                     & 32                             \\
    Rollouts per problem (group size)  & 8                                                                                      & 8                              \\
    PPO mini-batch size                & 8                                                                                      & 32                             \\
    Training rollout temperature       & \multicolumn{2}{c}{1.0 (top-$p\!=\!1.0$, top-$k\!=\!-1$)}                                                               \\
    Evaluation rollout temperature     & \multicolumn{2}{c}{0.6 (top-$p\!=\!0.95$)}                                                                              \\
    Validation rollouts                & 4                                                                                      & 16                             \\
    Validation frequency               & \multicolumn{2}{c}{every 5 training steps}                                                                              \\
    Max sequence length                & 18944                                                                                  & 18944                          \\
    GRPO clip ratio                    & \multicolumn{2}{c}{0.2}                                                                                                 \\
    Divergence to teacher              & reverse KL ($\alpha\!=\!1.0$)                                                          & JSD ($\alpha\!=\!0.5$)         \\
    Top-$K_\text{v}$ vocabulary for KL & 20 (teacher's top-20 tokens)                                                           & 100 (teacher's top-100 tokens) \\
    Reference model $\pi_{\text{ref}}$ & \multicolumn{2}{c}{EMA of $\pi_\theta$, update rate $0.01$}                                                             \\
    \midrule
    \credit{} contrastive count $C$    & 1                                                                                      & 1                              \\
    \credit{} coefficient $\lambda$    & 0.1                                                                                    & 0.1                            \\
    Contrastive input sampling         & \multicolumn{2}{c}{uniform from same batch, excluding own prompt}                                                       \\
    Contrastive prompt concat          & \multicolumn{2}{c}{$x'_k$ replaces $x$ in the user-turn prefix; $y_{<t}, z$ unchanged}                                  \\
    \midrule
    Hardware                           & \multicolumn{2}{c}{1 node $\times$ 8 NVIDIA H20 GPUs}                                                                   \\
    Framework                          & \multicolumn{2}{c}{verl~\cite{verl} with FSDP}                                                                          \\
    \bottomrule
  \end{tabular}
\end{table}

\subsection{Self-Teacher Context Examples}
\label{app:prompts}

We show the context that the self-teacher $\pi_{\text{ref}}(\cdot \mid x, y_{<t}, z)$ sees when re-evaluating the student's response. The teacher's input concatenates the original prompt, a correct solution from the batch (if available), and environment feedback $z$, followed by an instruction to re-solve the problem. The student's original response $y$ is placed in the assistant role; the teacher then re-evaluates $y$'s log-probabilities under this enriched context. Colors: {\color{blue!70!black}original prompt}, {\color{red!70!black}feedback $z$}, {\color{teal}student response $y$} (re-evaluated by teacher).

\definecolor{sysblue}{HTML}{2B5797}
\definecolor{respgreen}{HTML}{1A7A4C}
\definecolor{fbredcolor}{HTML}{B91C1C}

\begin{tcolorbox}[colback=gray!3, colframe=gray!50, title={\small LiveCodeBench --- Self-teacher context}, fonttitle=\bfseries\small, boxrule=0.5pt, arc=2pt, left=4pt, right=4pt, top=2pt, bottom=2pt]
  \small\ttfamily
  {\color{sysblue}\textbf{[User]}} You are a coding expert\ldots write a correct Python program\ldots\\
  Given rectangles on a 2D plane\ldots find the minimum y such that the total area above equals below\ldots\\[3pt]
  Correct solution:\\
  \textrm{\textit{(a successful rollout from the same batch, if available)}}\\
  \texttt{def separateSquares(s): \ldots \# correct implementation}\\[3pt]
  {\color{fbredcolor}Previous assessment:}\\
  {\color{fbredcolor}Test 1: Wrong Answer}\\
  {\color{fbredcolor}Input: [[26,30,2],[1,23,1]] \quad Output: 5 \quad Expected: 4}\\
  {\color{fbredcolor}Test 2: Runtime Error}\\
  {\color{fbredcolor}ZeroDivisionError: division by zero}\\
  {\color{fbredcolor}Line 15 in separateSquares (Solution.py)}\\[3pt]
  Now solve this problem step by step.\\[3pt]
  {\color{respgreen}\textbf{[Assistant]}} {\color{respgreen}\textit{(student's original response $y$, re-evaluated by teacher)}}\\
  {\color{respgreen}def separateSquares(squares): \ldots}
\end{tcolorbox}

\begin{tcolorbox}[colback=gray!3, colframe=gray!50, title={\small SciKnowEval --- Self-teacher context}, fonttitle=\bfseries\small, boxrule=0.5pt, arc=2pt, left=4pt, right=4pt, top=2pt, bottom=2pt]
  \small\ttfamily
  {\color{sysblue}\textbf{[System]}} Given a question and four options, select the right answer. Respond in the following format: <reasoning>\ldots</reasoning> <answer>\ldots</answer>\\
  For the answer, only output the letter (A, B, C, or D).\\[3pt]
  {\color{sysblue}\textbf{[User]}} Which of the following correctly describes the function of S- or Se-methyltransferases?\\
  A: They regulate\ldots \; B: They (de)methylate thiol and selenol metabolites\\
  C: They break down\ldots \; D: They catalyze\ldots\\[3pt]
  {\color{fbredcolor}Correct answer: B}\\[3pt]
  Now solve this problem step by step.\\[3pt]
  {\color{respgreen}\textbf{[Assistant]}} {\color{respgreen}\textit{(student's original response $y$)}}\\
  {\color{respgreen}<reasoning>\ldots</reasoning>}\\
  {\color{respgreen}<answer>A</answer>}
\end{tcolorbox}

\begin{tcolorbox}[colback=gray!3, colframe=gray!50, title={\small ToolAlpaca --- Self-teacher context}, fonttitle=\bfseries\small, boxrule=0.5pt, arc=2pt, left=4pt, right=4pt, top=2pt, bottom=2pt]
  \small\ttfamily
  {\color{sysblue}\textbf{[User]}} Your task is to answer the user's question using available tools.\\
  You have access to: Axolotl --- getRandomAxolotlImage, searchAxolotlImages\ldots\\[3pt]
  Question: Hey, can you show me a random picture of an axolotl?\\[3pt]
  {\color{fbredcolor}Actions mismatch: predicted [searchAxolotlImages],}\\
  {\color{fbredcolor}expected [getRandomAxolotlImage]}\\[3pt]
  Now solve this problem step by step.\\[3pt]
  {\color{respgreen}\textbf{[Assistant]}} {\color{respgreen}\textit{(student's original response $y$)}}\\
  {\color{respgreen}Thought: The user wants a random axolotl image\ldots}\\
  {\color{respgreen}Action: searchAxolotlImages}\\
  {\color{respgreen}Action Input: \{"color": "wild"\}}
\end{tcolorbox}

%% ============================================================
%% B. PROOFS
%% ============================================================
\section{Proofs}
\label{app:proofs}

We collect explicit proofs of the main theoretical results stated in Section~\ref{sec:method}. All identities are pointwise and make no distributional assumptions beyond those stated in each claim.

\paragraph{Proof of Theorem~\ref{thm:implicit-prm}.}
By Assumption~\ref{asm:calibration}, there is a joint $P_\pi(\hat{y}_t, z \mid x, y_{<t})$ whose two conditionals coincide with the policy's feedback-conditioned and unconditioned forward passes. Applying the product rule to this joint in two directions gives
\begin{align*}
  P_\pi(\hat{y}_t,\, z \mid x, y_{<t})
   & = P_\pi(\hat{y}_t \mid x, y_{<t}, z) \cdot P_\pi(z \mid x, y_{<t})          \\
   & = P_\pi(z \mid x, y_{<t}, \hat{y}_t) \cdot P_\pi(\hat{y}_t \mid x, y_{<t}).
\end{align*}
Equating and substituting the two marginal identities from Assumption~\ref{asm:calibration} yields
\begin{equation*}
  \pi(\hat{y}_t \mid x, y_{<t}, z) \cdot P_\pi(z \mid x, y_{<t})
  \;=\; P_\pi(z \mid x, y_{<t}, \hat{y}_t) \cdot \pi(\hat{y}_t \mid x, y_{<t}).
\end{equation*}
Dividing both sides by $\pi(\hat{y}_t \mid x, y_{<t}) \cdot P_\pi(z \mid x, y_{<t})$ (positive by Assumption~\ref{asm:calibration} with respect to the joint's support) and taking logs gives the chain of equalities in Eq.~\eqref{eq:implicit-prm} after substituting the definitions of $Q_t^{z}$ and $V_{t-1}^{z}$.
The sign consequences stated in Section~\ref{sec:implicit-prm} follow by taking expectations of $r_t$ under the two policy distributions and matching against the definition of KL divergence:
\begin{align*}
  \mathbb{E}_{\pi(\cdot \mid x, y_{<t})}[r_t]
   & = \mathbb{E}_{\pi(\cdot \mid x, y_{<t})}\!\left[\log \frac{\pi(\cdot \mid x, y_{<t}, z)}{\pi(\cdot \mid x, y_{<t})}\right]
  = -\, D_\mathrm{KL}\!\bigl(\pi(\cdot \mid x, y_{<t}) \,\|\, \pi^z\bigr) \;\leq\; 0,                                           \\
  \mathbb{E}_{\pi^z}[r_t]
   & = D_\mathrm{KL}\!\bigl(\pi^z \,\|\, \pi(\cdot \mid x, y_{<t})\bigr) \;\geq\; 0.
\end{align*}
\hfill$\square$

\paragraph{Proof of Corollary~\ref{cor:trajectory-pmi}.}
Specializing Theorem~\ref{thm:implicit-prm} to the realized token $y_t$ (i.e., setting $\hat{y}_t = y_t$) gives
\begin{equation*}
  r_t(y_t) \;=\; \log P_\pi(z \mid x, y_{<t}, y_t) - \log P_\pi(z \mid x, y_{<t})
  \;=\; V_t^{z}(x) - V_{t-1}^{z}(x),
\end{equation*}
where $V_t^{z}(x) \triangleq \log P_\pi(z \mid x, y_{\leq t})$. The per-step realized reward is thus the one-step filtering increment of the state-value $V^z$ along the trajectory. Summing over $t = 1, \ldots, T$ telescopes:
\begin{equation*}
  \sum_{t=1}^{T} r_t(y_t) \;=\; V_T^{z}(x) - V_0^{z}(x) \;=\; \log P_\pi(z \mid x, y) - \log P_\pi(z \mid x) \;=\; \operatorname{pmi}_\pi(y;\, z \mid x),
\end{equation*}
using $V_0^{z}(x) = \log P_\pi(z \mid x)$ (empty prefix) and $V_T^{z}(x) = \log P_\pi(z \mid x, y)$ (complete response), together with the definition of pointwise mutual information. \hfill$\square$

The expectation-level identity $\mathbb{E}_{(Y, Z) \mid X}\!\bigl[\sum_t r_t(Y_t)\bigr] = I_\pi(Y; Z \mid X)$ in Section~\ref{sec:implicit-prm} follows by taking expectations of both sides of Eq.~\eqref{eq:trajectory-pmi} over $(Y, Z) \mid X$ and matching against the definition of conditional mutual information; the per-token chain-rule version $\mathbb{E}_{(Y_t, Z) \mid X, Y_{<t}}[r_t(Y_t)] = I_\pi(Y_t;\, Z \mid X,\, Y_{<t})$ follows analogously by applying the chain rule of mutual information to the sum.

\paragraph{Proof of Proposition~\ref{prop:pcmi-bound}.}
By linearity of expectation over the $C$ i.i.d.\ contrastive inputs,
\begin{equation*}
  \mathbb{E}_{\{x'_k\}}\!\left[\hat{S}_t(\hat{y}_t, x)\right]
  \;=\; \log \pi_{\text{ref}}(\hat{y}_t \mid x, y_{<t}, z) \;-\; \mathbb{E}_{x' \sim \mathcal{D}}\!\left[\log \pi_{\text{ref}}(\hat{y}_t \mid x', y_{<t}, z)\right],
\end{equation*}
since each summand in $\tfrac{1}{C} \sum_k \log \pi_{\text{ref}}(\hat{y}_t \mid x'_k, y_{<t}, z)$ has the same marginal mean. By Jensen's inequality applied to the concave function $\log$,
\begin{equation*}
  \mathbb{E}_{x' \sim \mathcal{D}}\!\left[\log \pi_{\text{ref}}(\hat{y}_t \mid x', y_{<t}, z)\right]
  \;\leq\; \log \mathbb{E}_{x' \sim \mathcal{D}}\!\left[\pi_{\text{ref}}(\hat{y}_t \mid x', y_{<t}, z)\right].
\end{equation*}
Negating and adding $\log \pi_{\text{ref}}(\hat{y}_t \mid x, y_{<t}, z)$ to both sides gives
\begin{align*}
  \mathbb{E}_{\{x'_k\}}\!\left[\hat{S}_t(\hat{y}_t, x)\right]
   & \;\geq\; \log \pi_{\text{ref}}(\hat{y}_t \mid x, y_{<t}, z) - \log \mathbb{E}_{x' \sim \mathcal{D}}\!\left[\pi_{\text{ref}}(\hat{y}_t \mid x', y_{<t}, z)\right] \\
   & \;=\; \operatorname{pCMI}^{\mathcal D}(\hat{y}_t; x \mid y_{<t}, z).
\end{align*}
The bound holds for every $C \geq 1$ and does not invoke Assumption~\ref{asm:calibration}: it is a statement about the policy $\pi_{\text{ref}}$ and the data distribution $\mathcal{D}$ only. \hfill$\square$

%% ============================================================
%% C. APPROXIMATION GAP
%% ============================================================
\section{\texorpdfstring{Approximation Gap when $\pi_{\text{ref}} \neq \pi_\theta$}{Approximation Gap when pi\_ref differs from pi\_theta}}
\label{app:gap}

Under posterior compatibility, Theorem~\ref{thm:implicit-prm} is an exact identity in the self-distillation setting $\pi_{\text{ref}} = \pi_\theta = \pi$. In practice, a lagged reference model is used (e.g., an EMA copy or a periodically synchronized snapshot). We characterize the resulting approximation gap.

When $\pi_{\text{ref}} \neq \pi_\theta$, the reward $r_t(\hat{y}_t)$ decomposes as:
\begin{align}
  r_t(\hat{y}_t)
   & = \log \pi_{\text{ref}}(\hat{y}_t \mid x, y_{<t}, z) - \log \pi_\theta(\hat{y}_t \mid x, y_{<t}) \nonumber                                                                   \\
   & = \underbrace{\log \frac{\pi_{\text{ref}}(\hat{y}_t \mid x, y_{<t}, z)}{\pi_{\text{ref}}(\hat{y}_t \mid x, y_{<t})}}_{\text{(i) implicit advantage under } \pi_{\text{ref}}}
  \;+\; \underbrace{\log \frac{\pi_{\text{ref}}(\hat{y}_t \mid x, y_{<t})}{\pi_\theta(\hat{y}_t \mid x, y_{<t})}}_{\text{(ii) teacher-student gap}}.
  \label{eq:gap-decomposition}
\end{align}

Term~(i) is the implicit PRM under the reference model: assuming $\pi_{\text{ref}}$ is posterior-compatible (Assumption~\ref{asm:calibration}), Theorem~\ref{thm:implicit-prm} applied to $\pi_{\text{ref}}$ gives $Q_t^{\text{ref}}(\hat{y}_t, x) - V_{t-1}^{\text{ref}}(x)$, where $Q_t^{\text{ref}}$ and $V_t^{\text{ref}}$ are the action-value and state-value under $\pi_{\text{ref}}$. This term carries all feedback-derived credit.

Term~(ii) is the log-ratio between the reference and current policy evaluated \emph{without} feedback $z$. It depends on $x$ and $\hat{y}_t$, but not on $z$, so it is not a feedback-derived credit signal. When $\pi_{\text{ref}}$ closely tracks $\pi_\theta$ (via EMA, periodic synchronization, or trust-region constraints), it is bounded by the max of the two one-sided R\'enyi-$\infty$ divergences at each position:
\begin{equation}
  \left|\text{term (ii)}\right| \;\leq\; \max\!\Bigl(
  D_\infty\!\bigl(\pi_{\text{ref}}(\cdot \mid x, y_{<t}) \,\big\|\, \pi_\theta(\cdot \mid x, y_{<t})\bigr),\;
  D_\infty\!\bigl(\pi_\theta(\cdot \mid x, y_{<t}) \,\big\|\, \pi_{\text{ref}}(\cdot \mid x, y_{<t})\bigr)
  \Bigr),
\end{equation}
since $D_\infty(p \| q) = \sup_y \log p(y)/q(y)$ upper-bounds only one side of $\log \pi_{\text{ref}}/\pi_\theta$; both divergences are simultaneously small whenever $\pi_{\text{ref}}$ and $\pi_\theta$ are close.

\credit{}'s contrastive correction applies \emph{only} to the feedback-conditioned teacher log-probability (Eq.~\ref{eq:credit}), leaving term~(ii) unchanged. Term~(ii) is therefore the same standard distillation/regularization term that appears in SDPO and is shared by SDPO and \credit{}; the two methods differ only in how they treat term~(i).

\paragraph{Summary.} With a lagged reference model, the implicit-PRM interpretation holds under $\pi_{\text{ref}}$ rather than $\pi_\theta$, and an additional $z$-independent student/reference log-ratio term appears. This additional term carries no feedback-derived credit and is present identically in both SDPO and \credit{}; the contrastive baseline in \credit{} debiases the feedback-dependent teacher signal without interacting with it.

%% ============================================================
%% D. SEQUENCE-LEVEL CONTRASTIVE pMI DERIVATION
%% ============================================================
\section{Sequence-Level Contrastive pMI: Full Derivation}
\label{app:contrastive-pmi}

We derive Eq.~\eqref{eq:credit-seq} in the main text, which expresses the sequence-level sum of \credit{} rewards in terms of a contrastive pointwise mutual information plus an anti-genericity bonus on mismatched-input surprisal. Throughout this appendix we work under Assumption~\ref{asm:calibration} and the exact self-distillation setting $\pi_{\text{ref}} = \pi_\theta \equiv \pi$; the lagged case adds only the $z$-independent gap of Appendix~\ref{app:gap}, which does not interact with the derivation.

\paragraph{Full-ratio contrastive reward.}
Define the input-dependent self-distillation reward
\[
  r_t(\hat{y}_t;\, x') \;\triangleq\; \log \pi(\hat{y}_t \mid x', y_{<t}, z) \;-\; \log \pi(\hat{y}_t \mid x', y_{<t}),
\]
and the full-ratio contrastive reward
\[
  R^{\text{full}}_t(\hat{y}_t) \;\triangleq\; r_t(\hat{y}_t;\, x) \;-\; \lambda\, \mathbb{E}_{x' \sim \mathcal{D}}[r_t(\hat{y}_t;\, x')].
\]
Telescoping the realized-token reward along the trajectory $y$ and applying Corollary~\ref{cor:trajectory-pmi} to each input gives
\begin{equation*}
  \sum_{t=1}^T R^{\text{full}}_t(y_t) \;=\; \operatorname{pmi}_\pi(y;\, z \mid x) \;-\; \lambda\, \mathbb{E}_{x'}\!\bigl[\operatorname{pmi}_\pi(y;\, z \mid x')\bigr],
\end{equation*}
which is exactly the ideal contrastive pMI $\operatorname{pmi}_\pi(y; z \mid x) - \lambda\, \mathbb{E}_{x'}[\operatorname{pmi}_\pi(y; z \mid x')]$ of Section~\ref{sec:info-theory}.

\paragraph{\credit{} as a teacher-side surrogate.}
The \credit{} reward in Eq.~\eqref{eq:credit} drops the contrastive student term, keeping only the feedback-conditioned teacher log-probability under $x'$:
\[
  R_t(\hat{y}_t) \;=\; r_t(\hat{y}_t;\, x) \;-\; \lambda\, \mathbb{E}_{x'}[\log \pi(\hat{y}_t \mid x', y_{<t}, z)].
\]
Telescoping the realized-token reward gives
\begin{align*}
  \sum_{t=1}^T R_t(y_t)
   & = \operatorname{pmi}_\pi(y;\, z \mid x) \;-\; \lambda\, \mathbb{E}_{x'}\!\bigl[\log \pi(y \mid x', z)\bigr].
\end{align*}
By Bayes under posterior compatibility (Assumption~\ref{asm:calibration}), $\pi(y \mid x', z) = \pi(y \mid x') \cdot P_\pi(z \mid x', y) / P_\pi(z \mid x')$, so $\log \pi(y \mid x', z) = \log \pi(y \mid x') + \operatorname{pmi}_\pi(y;\, z \mid x')$ as an exact identity (the $y$-independent normalizer $-\log P_\pi(z \mid x')$ is absorbed into the pMI definition). Substituting yields
\[
  \sum_{t=1}^T R_t(y_t) \;=\; \operatorname{pmi}_\pi(y; z \mid x) \;-\; \lambda\, \mathbb{E}_{x'}\!\bigl[\operatorname{pmi}_\pi(y; z \mid x')\bigr] \;+\; \lambda\, \mathbb{E}_{x'}\!\bigl[-\log \pi(y \mid x')\bigr],
\]
recovering Eq.~\eqref{eq:credit-seq} exactly, with no residual additive constant.

\paragraph{Interpretation of the anti-genericity bonus.}
The extra term $\lambda\, \mathbb{E}_{x'}[-\log \pi(y \mid x')]$ is the expected surprisal of $y$ under unrelated inputs. Because $-\log \pi(y \mid x') \ge 0$, the term is nonnegative and grows as $y$ becomes \emph{less} likely under random $x'$: responses that only the current $x$ could plausibly produce receive a larger bonus, while generic responses the model would produce under arbitrary inputs receive less. \credit{} is thus slightly more aggressive at suppressing input-generic boilerplate than the ideal contrastive pMI alone.

%% ============================================================
%% E. EMPIRICAL VERIFICATION OF ASSUMPTION 1
%% ============================================================
\section{Empirical Support for Assumption 1 at the Answer Position}
\label{app:calibration}

Theorem~\ref{thm:implicit-prm} and Corollary~\ref{cor:trajectory-pmi} rest on Assumption~\ref{asm:calibration}, which treats the policy's two forward passes $\pi(\cdot \mid x, y_{<t})$ and $\pi(\cdot \mid x, y_{<t}, z)$ as conditionals of a common joint $P_\pi(\hat{y}_t, z \mid x, y_{<t})$. This is a modeling interpretation rather than a derived property. Here we provide empirical support in a controlled setting: the 4-dimensional answer-letter subspace of a multiple-choice benchmark, where the \emph{projected compatibility condition}---whether the student distribution lies in the convex hull of the four teacher distributions---can be checked by a small convex-feasibility problem. We do not attempt to certify full-vocabulary calibration at arbitrary prefixes; this is a necessary-consequence check on a particular projection.

\paragraph{Setup.}
We use SciKnowEval~\citep{sciknoweval} at Level~3 on the Materials Science domain, a 4-option MCQ with $z \in \mathcal{Z} = \{A, B, C, D\}$. We draw a gold-balanced subset of 100 problems per model (25 per gold letter, no filtering by model correctness) and test two models: Qwen3-8B (nothink mode, matching our main experiments) and OLMo-3-7B-Instruct. For each problem we construct chat prompts mirroring SDPO's reprompt format; for the student context we feed the original prompt, for the teacher context for each $z \in \mathcal{Z}$ we additionally inject ``Previous assessment: The correct answer is $\backslash$boxed\{$z$\}. Now solve this problem step by step.'' The assistant turn is forced to the answer block via the prefix \texttt{<answer>\textbackslash n}, so the next-token position is pinned to the answer letter.

\paragraph{Measurement.}
For each problem and each context we run one forward pass, extract the next-token probability vector, and project it onto the 4-letter subspace by summing over all token ids whose decoded form strips to a given letter and renormalizing. This yields $s \in \Delta^4$ (student) and $T \in \mathbb{R}^{4 \times 4}$ (teacher rows indexed by $z$). On this subspace, Assumption~\ref{asm:calibration} becomes: does there exist $P \in \Delta^{|\mathcal{Z}|}$ with $s = T^\top P$? We solve
\begin{equation*}
  \hat{P} \;=\; \arg\min_{P \in \Delta^{|\mathcal{Z}|}} \|\,s - T^\top P\,\|_1
\end{equation*}
using SLSQP with simplex constraints, and report the residual $\|s - T^\top \hat{P}\|_1$. A residual of zero means the projected compatibility condition holds exactly on this subspace; note that this is strictly weaker than full-vocabulary compatibility, because projecting onto 4 dimensions discards the $|\mathcal{V}| - 4$-dimensional detail the assumption also constrains.

\paragraph{Baselines and prerequisites.}
Three side measurements guard the interpretation. (i)~\emph{Letter-token mass} $\sum_{\ell \in \mathcal{Z}} \sum_{\tau \in \mathrm{Ids}(\ell)} \pi(\tau \mid \cdot)$: if the forcing prefix works, this is close to $1$. (ii)~\emph{Teacher fidelity} $T_{z \to z}$: if below $\approx 0.3$ the LLM is ignoring the $z$ hint and the test cannot distinguish compatibility from trivial teacher collapse. (iii)~\emph{Uniform-$P$ baseline} $\|s - T^\top \mathbf{u}\|_1$ with $\mathbf{u} = (1/|\mathcal{Z}|, \dots, 1/|\mathcal{Z}|)$: bounds the residual under the uninformative mixture. A small LS residual is only meaningful when the uniform baseline is large.

\paragraph{Primary metric: self-consistency.}
Because we do not filter by gold correctness, we report a model-internal consistency rate that does not depend on ground truth: \emph{does $\arg\max_z \hat{P}_z$ match $\arg\max_{\ell} s_\ell$?} When teacher fidelity is high (teachers place most of their mass on their respective letters), compatibility implies that the recovered posterior $\hat{P}$ should usually align with the student's own answer distribution. A high self-consistency rate is therefore a joint signal that both prerequisites hold (fidelity non-trivial, mixture exists) and that the fit is semantically coherent, not a numerical coincidence.

\begin{figure}[t]
  \centering
  \includegraphics[width=0.9\linewidth]{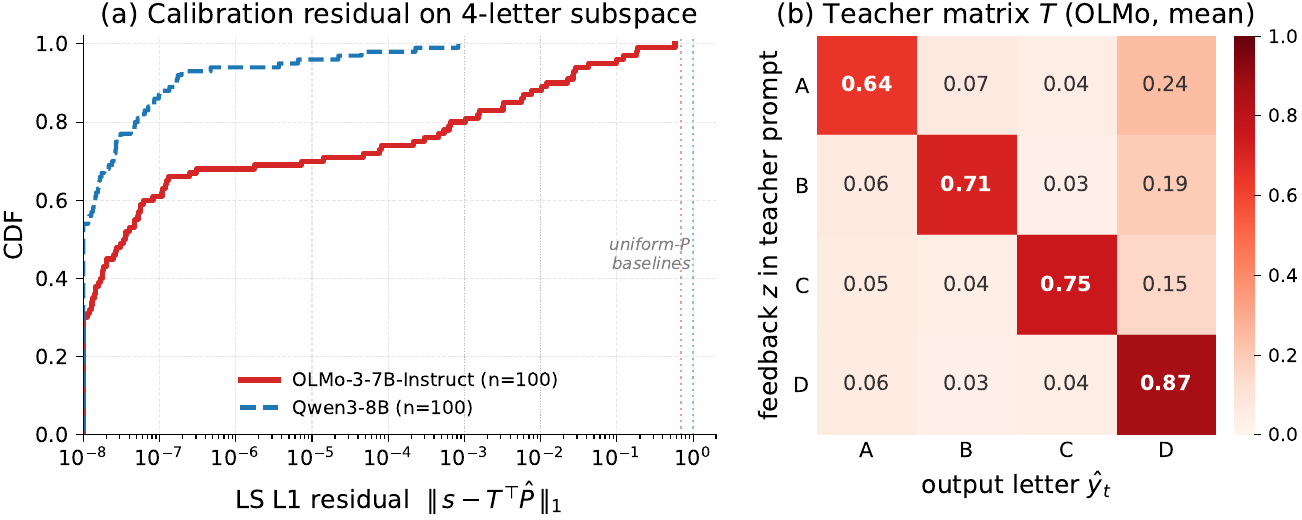}
  \caption{\textbf{Projected compatibility check for Assumption~\ref{asm:calibration} at the answer-letter position.} (a) CDF of the LS residual $\|s - T^\top \hat{P}\|_1$ on the 4-letter subspace, for 100 gold-balanced SciKnowEval Material problems per model. Qwen3-8B (dashed blue) achieves residual $< 10^{-3}$ on 100\% of records; OLMo-3-7B-Instruct (solid red) on 80\%, with the remaining tail tracking records where teacher fidelity fails (see (b)). Dotted vertical lines mark the median uniform-$P$ baseline ($\approx 0.68$ OLMo, $\approx 0.99$ Qwen). (b) Teacher matrix $T$ averaged over OLMo records: diagonal (bold) shows teacher fidelity to the $z$ hint, off-diagonal $D$ column shows a residual bias toward D that accounts for the fidelity tail.}
  \label{fig:calibration}
\end{figure}

\begin{table}[t]
  \centering
  \caption{Summary of the projected compatibility check for Assumption~\ref{asm:calibration} on SciKnowEval Material (100 gold-balanced records per model). LS residuals on the 4-letter subspace are several orders of magnitude below both the uniform-$P$ baseline and the paper-relevant $10^{-3}$ threshold on the majority of records; self-consistency $\arg\max(\hat{P}) = \arg\max(s)$ confirms the fit is semantically coherent.}
  \label{tab:calibration}
  \small
  \begin{tabular}{lcc}
    \toprule
    Metric                                           & Qwen3-8B             & OLMo-3-7B-Instruct   \\
    \midrule
    \# records                                       & 100                  & 100                  \\
    Gold balance (A / B / C / D)                     & 25 / 25 / 25 / 25    & 25 / 25 / 25 / 25    \\
    Letter-token mass (median)                       & 1.000                & 1.000                \\
    Teacher fidelity (median, worst $z$)             & 0.986                & 0.862                \\
    Teacher fidelity (median, best $z$)              & 0.999                & 0.990                \\
    \midrule
    LS L1 residual (median)                          & $9.0 \times 10^{-9}$ & $3.5 \times 10^{-8}$ \\
    Fraction $\|s - T^\top\hat{P}\|_1 < 10^{-3}$     & \textbf{100\%}       & \textbf{80\%}        \\
    Uniform-$P$ L1 baseline (median)                 & 0.99                 & 0.68                 \\
    Uniform / LS ratio (median)                      & $9.0 \times 10^{7}$  & $1.5 \times 10^{7}$  \\
    \midrule
    Self-consistency $\arg\max \hat{P} = \arg\max s$ & \textbf{100\%}       & \textbf{96\%}        \\
    $\hat{P}$ mass on student's argmax (median)      & 0.997                & 0.754                \\
    \bottomrule
  \end{tabular}
\end{table}

\paragraph{Results.}
Figure~\ref{fig:calibration} and Table~\ref{tab:calibration} summarize the outcome. Prerequisites are satisfied: letter-token mass has median $1.00$ for both models, and teacher fidelity medians sit at $0.86$--$0.99$, so teachers genuinely respond to the $z$ hint and span the 4-simplex (uniform-$P$ baseline $\approx 0.68$--$0.99$). Under these conditions, the projected compatibility residual is effectively zero for the large majority of records: on Qwen3-8B it falls below $10^{-3}$ on \textbf{every} record (median $9 \times 10^{-9}$, max $8 \times 10^{-4}$), and on OLMo on 80\% of records. The OLMo tail is not evidence against the assumption: it tracks records where teacher fidelity fails (the q25 of per-$z$ fidelity is $0.2$--$0.6$ on OLMo versus $0.9$--$1.0$ on Qwen, and Figure~\ref{fig:calibration}(b) shows a residual bias toward $D$ in the teacher matrix), i.e., exactly the records where the LLM itself ignores the $z$ hint and the projected compatibility question is ill-posed because the four teacher rows collapse rather than spanning the simplex. Self-consistency is 100\% (Qwen) / 96\% (OLMo): the LS-recovered $\hat{P}$ exactly or almost always picks the same letter the student does, and its median mass on that letter is $\geq 0.75$. Secondary gold-based metrics (not used to select problems) show that $\arg\max \hat{P}$ tracks student argmax even when the student is wrong, not ground truth, which is the exact semantics of the implicit posterior $P(z \mid x, y_{<t})$ in this subspace.

\paragraph{Scope.}
Compatibility on the projection is a \emph{necessary consequence} of Assumption~\ref{asm:calibration}, not the assumption itself. This test is a \emph{projected compatibility} check at one terminal position under a forcing prompt, on a 4-dim semantic subspace: passing it rules out a worst-case failure of the assumption on this controlled projection but does not certify full-vocabulary compatibility at arbitrary interior prefixes, does not bear on Jensen-gap effects that our per-token pCMI analysis (Proposition~\ref{prop:pcmi-bound}) already acknowledges as one-sided, and does not speak to the credit assignment actually realized during training at non-terminal positions. What it does provide is a quantitative anchor for reading $\hat{P}$ as a semantically meaningful implicit posterior on this projection. We view this as empirical support for the \emph{modeling interpretation} of Assumption~\ref{asm:calibration} under the SDPO reprompt format, in the same sense that DPO's reward-as-log-ratio is supported by its empirical effectiveness rather than derived from training dynamics; it is \emph{not} a mechanistic claim about what the reward signal attributes credit to during training.

%% ============================================================
%% F. TOWARD INTERVENTIONAL CREDIT
%% ============================================================
\section{Toward Interventional Credit}
\label{app:interventional}

Our main theory reads the self-distillation reward as Bayesian filtering on a feedback variable $z$ (Theorem~\ref{thm:implicit-prm}): $r_t$ measures how much a candidate token $\hat{y}_t$ updates the posterior over $z$. \credit{} sits squarely in this \emph{informational, input-conditional} frame---it is a surrogate that sharpens the input-conditional signal in $r_t^{z}$, not a causal-credit estimator; in general, $z$ need not itself be a downstream consequence of the current token. This appendix develops a complementary \emph{interventional} reading at the level of the target, not of \credit{}: we define an ideal counterfactual target based on a final success outcome $O$, and characterize sufficient conditions under which the feedback-conditioned reward's ordering over candidate tokens coincides with this target's. Under a feedback-sufficiency assumption, ordering is preserved (Prop.~\ref{prop:rank-preserve}), and under a one-sided-witness sub-case the gap collapses to an action-independent constant (Cor.~\ref{cor:one-sided}). We emphasize upfront that this appendix establishes sufficient conditions, not a causal identification of \credit's signal; an empirical tie-down under a setting that non-trivially satisfies these conditions (e.g., binary verifier correctness on a code or tool-use task) is left to future work.

\paragraph{Outcome variable and ideal interventional credit.}
Let $h_t = (x, y_{<t})$ and let $\hat{y}_t$ denote a candidate token at position $t$. Introduce a final-success variable $O \in \{0, 1\}$ corresponding to a downstream verifier ($O = 1$ iff the completed rollout sampled under $\pi$ after position $t$ is judged correct: MCQ letter matches gold, tool-use sequence matches reference, code passes tests, and so on). The \emph{ideal interventional credit} is the log-uplift in success probability under an atomic intervention at position $t$:
\begin{equation}
  r_t^{\mathrm{cf}}(\hat{y}_t;\, h_t) \;\triangleq\; \log \frac{P_\pi\!\left(O = 1 \,\middle|\, \mathrm{do}(\hat{Y}_t = \hat{y}_t),\, h_t\right)}{P_\pi(O = 1 \mid h_t)},
  \label{eq:rcf}
\end{equation}
where $\mathrm{do}(\hat{Y}_t = \hat{y}_t)$ forces position $t$ to the candidate and lets subsequent tokens be sampled under $\pi$. Exact estimation of \eqref{eq:rcf} would require branching intervention rollouts at every prefix for every candidate---infeasible as a dense training signal. We therefore treat $r_t^{\mathrm{cf}}$ as an ideal target and ask how close $r_t^{z}$ comes to it.

\paragraph{Identification via a success-conditioned teacher.}
Assume (i)~\emph{consistency}: after $\mathrm{do}(\hat{Y}_t = \hat{y}_t)$, subsequent tokens follow the unmodified $\pi$; (ii)~\emph{sequential ignorability}: given $h_t$, no unobserved variable jointly determines $\hat{Y}_t$ and $O$; (iii)~\emph{positivity}: $\pi(\hat{y}_t \mid h_t) > 0$. Define the success-conditioned teacher
\begin{equation*}
  \pi^{+}(\hat{y}_t \mid h_t) \;\triangleq\; P_\pi(\hat{Y}_t = \hat{y}_t \mid h_t,\, O = 1).
\end{equation*}
Under (i)--(iii), do-calculus reduces intervention to conditioning and Bayes' rule rearranges the ratio:
\begin{equation}
  r_t^{\mathrm{cf}}(\hat{y}_t;\, h_t) \;=\; \log \frac{\pi^{+}(\hat{y}_t \mid h_t)}{\pi(\hat{y}_t \mid h_t)} \;=\; \operatorname{pmi}_\pi(\hat{Y}_t = \hat{y}_t \,;\, O = 1 \mid h_t).
  \label{eq:rcf-pmi}
\end{equation}

\paragraph{Gap as a difference of pMIs.}
Under Assumption~\ref{asm:calibration}, the feedback-conditioned reward admits the analogous form $r_t^{z}(\hat{y}_t; h_t) = \operatorname{pmi}_\pi(\hat{Y}_t = \hat{y}_t;\, Z = z \mid h_t)$. Subtracting \eqref{eq:rcf-pmi} gives
\begin{equation}
  r_t^{z}(\hat{y}_t;\, h_t) - r_t^{\mathrm{cf}}(\hat{y}_t;\, h_t)
  \;=\;
  \operatorname{pmi}_\pi(\hat{Y}_t = \hat{y}_t;\, Z = z \mid h_t) - \operatorname{pmi}_\pi(\hat{Y}_t = \hat{y}_t;\, O = 1 \mid h_t).
  \label{eq:gap-pmi}
\end{equation}
The gap is the difference between two information quantities: how informative the realized choice $\hat{Y}_t = \hat{y}_t$ is about feedback $Z$ versus about outcome $O$. When these coincide the gap vanishes pointwise; otherwise it is directly characterized by their mismatch.

\paragraph{Outcome-sufficient feedback.}
\begin{assumption}[Outcome-sufficient feedback]
  \label{asm:osf}
  Feedback $Z$ is \emph{outcome-sufficient} at $h_t$ if $Z \perp\!\!\!\perp \hat{Y}_t \mid (O, h_t)$: given $(h_t, O)$, the distribution of $Z$ does not further depend on which action was taken.
\end{assumption}
We note two separate requirements for Prop.~\ref{prop:rank-preserve} to apply. First, Assumption~\ref{asm:osf} itself holds whenever $Z$ is a function of $(O, h_t)$---by construction if $Z = f(O, h_t)$, approximately if $Z$ summarizes $O$ with action-independent noise. Second, for a \emph{particular} observed value $z$, the theorem additionally requires the positive-informativeness condition $q_1(z, h_t) > q_0(z, h_t)$; these are distinct---Assumption~\ref{asm:osf} can hold while the theorem's conclusion is empty because $q_1 = q_0$ for the observed $z$. Two failure modes illustrate how the pairing can break. $Z$ may be a \emph{deterministic function of $h_t$ alone} (e.g., the gold letter read off a test prompt): Assumption~\ref{asm:osf} then holds vacuously, but the observed $z$ has $q_1(z, h_t) = q_0(z, h_t)$ so Prop.~\ref{prop:rank-preserve} does not apply. Alternatively, $Z$ may \emph{carry trajectory detail informative about $\hat{Y}_t$ beyond what $O$ conveys}, in which case Assumption~\ref{asm:osf} fails outright. The first failure mode is the one our MCQ diagnostic falls into. Writing $p(\hat{y}_t; h_t) := P_\pi(O = 1 \mid \hat{Y}_t = \hat{y}_t, h_t)$ and $q_b(z, h_t) := P(Z = z \mid O = b, h_t)$ for $b \in \{0, 1\}$, total probability under Assumption~\ref{asm:osf} is affine in $p$:
\begin{equation}
  P(Z = z \mid \hat{Y}_t = \hat{y}_t, h_t) \;=\; q_0(z, h_t) + \bigl(q_1(z, h_t) - q_0(z, h_t)\bigr)\, p(\hat{y}_t; h_t).
  \label{eq:affine}
\end{equation}
We say $z$ is \emph{positively informative} for success if $q_1(z, h_t) > q_0(z, h_t)$, i.e., observing $Z = z$ upweights success over failure.

\begin{proposition}[Rank preservation]
  \label{prop:rank-preserve}
  Under Assumptions~\ref{asm:calibration} and \ref{asm:osf}, identification conditions (i)--(iii), and positive informativeness of $z$, for any two candidates $\hat{y}_t, \hat{y}'_t$ at the same prefix,
  \begin{equation*}
    r_t^{\mathrm{cf}}(\hat{y}_t;\, h_t) \,\geq\, r_t^{\mathrm{cf}}(\hat{y}'_t;\, h_t) \quad \Longleftrightarrow \quad r_t^{z}(\hat{y}_t;\, h_t) \,\geq\, r_t^{z}(\hat{y}'_t;\, h_t).
  \end{equation*}
\end{proposition}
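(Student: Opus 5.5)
The plan is to reduce both rewards to monotone functions of the same scalar, the success probability $p(\hat{y}_t; h_t)$, with a candidate-independent normalizer. Then both orderings coincide with the ordering of $p$, and hence with each other.

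\emph{Step 1 (interventional side).} By identification conditions (i)--(iii), Eq.~\eqref{eq:rcf-pmi} gives
\[
r_t^{\mathrm{cf}}(\hat{y}_t; h_t) = \log \frac{P_\pi(O=1 \mid \hat{Y}_t=\hat{y}_t, h_t)}{P_\pi(O=1 \mid h_t)} = \log p(\hat{y}_t; h_t) - \log P_\pi(O=1\mid h_t).
\]
This is the Bayes rearrangement of $\pi^+/\pi$. The denominator does not depend on the candidate, so $r_t^{\mathrm{cf}}$ is a strictly increasing function of $p(\hat{y}_t;h_t)$. Positivity (iii) ensures the conditioning is well defined.

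\emph{Step 2 (feedback side).} Theorem~\ref{thm:implicit-prm} under Assumption~\ref{asm:calibration} gives
\[
r_t^z(\hat{y}_t;h_t) = \log P_\pi(Z=z\mid h_t,\hat{y}_t) - \log P_\pi(Z=z \mid h_t).
\]
The second term is again candidate-independent. I then substitute the affine representation Eq.~\eqref{eq:affine}, which follows from total probability over $O\in\{0,1\}$ together with Assumption~\ref{asm:osf}. Outcome-sufficiency is what lets $P(Z=z\mid O=b,\hat{y}_t,h_t)$ be replaced by $q_b(z,h_t)$. The result is
\[
r_t^z = \log\bigl(q_0 + (q_1-q_0)\,p(\hat{y}_t;h_t)\bigr) - \text{const}.
\]

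\emph{Step 3 (monotonicity and conclusion).} Positive informativeness, $q_1>q_0$, makes the map $p\mapsto q_0+(q_1-q_0)p$ strictly increasing on $[0,1]$. Composing with $\log$ keeps it strictly increasing wherever the argument is positive. So for two candidates we get
\[
r_t^{\mathrm{cf}}(\hat{y}_t)\ge r_t^{\mathrm{cf}}(\hat{y}'_t) \iff p(\hat{y}_t)\ge p(\hat{y}'_t) \iff r_t^z(\hat{y}_t)\ge r_t^z(\hat{y}'_t).
\]
Each equivalence uses strict monotonicity in both directions.

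The main obstacle is bookkeeping rather than depth. I expect the delicate point to be making sure the $P_\pi$ in Theorem~\ref{thm:implicit-prm}, which is the compatibility joint over $(y,z)$, is the same joint in which $O$ lives and over which Assumption~\ref{asm:osf} and Eq.~\eqref{eq:affine} are stated. I will assume a single joint over $(\hat{Y}_t, O, Z)\mid h_t$ that extends the compatibility joint.

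A second point is degenerate cases. If $p=0$ and $q_0=0$ for some candidate, the logs become $-\infty$. I would handle this by working in the extended reals, where the ordering statements remain valid, or by restricting to the support as in the proof of Theorem~\ref{thm:implicit-prm}.
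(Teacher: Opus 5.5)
Your proposal is correct and follows essentially the same route as the paper's proof sketch. Both use Eq.~\eqref{eq:rcf-pmi} and the affine representation Eq.~\eqref{eq:affine} to write $r_t^{\mathrm{cf}}$ and $r_t^{z}$ as strictly increasing functions of the common scalar $p(\hat{y}_t; h_t)$ minus candidate-independent normalizers, and conclude that the two rewards induce the same ordering; your remarks on a single joint over $(\hat{Y}_t, O, Z)\mid h_t$ and on extended-real handling when $p = 0$ and $q_0 = 0$ only make explicit points the paper leaves implicit.
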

\begin{proof}[Proof sketch]
  By \eqref{eq:rcf-pmi}, $r_t^{\mathrm{cf}}(\hat{y}_t) = \log p(\hat{y}_t; h_t) - \log P_\pi(O{=}1 \mid h_t)$, strictly monotone increasing in $p$. By \eqref{eq:affine}, $r_t^{z}(\hat{y}_t) = \log\bigl(q_0 + (q_1 - q_0)\, p(\hat{y}_t; h_t)\bigr) - \log P_\pi(Z{=}z \mid h_t)$, strictly monotone increasing in $p$ whenever $q_1 > q_0$. Monotone transforms of the same underlying argument induce the same ordering on any finite candidate set.
\end{proof}

\begin{corollary}[One-sided witness]
  \label{cor:one-sided}
  If, additionally, $q_0(z, h_t) = 0$ (feedback $z$ is observed only under success), then $r_t^{z}(\hat{y}_t; h_t) = r_t^{\mathrm{cf}}(\hat{y}_t; h_t)$ identically: the feedback-conditioned reward equals the ideal interventional credit.
\end{corollary}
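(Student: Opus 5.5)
The plan is to substitute $q_0(z,h_t)=0$ into the affine representation \eqref{eq:affine} and show that every factor other than $p(\hat{y}_t;h_t)$ cancels in the log-ratio defining $r_t^{z}$. By Assumption~\ref{asm:calibration} and Theorem~\ref{thm:implicit-prm} (applied with $h_t=(x,y_{<t})$),
\[
r_t^{z}(\hat{y}_t;h_t)=\log P_\pi(Z=z\mid \hat{Y}_t=\hat{y}_t,h_t)-\log P_\pi(Z=z\mid h_t).
\]
Under Assumption~\ref{asm:osf}, \eqref{eq:affine} with $q_0=0$ gives the numerator as $q_1(z,h_t)\,p(\hat{y}_t;h_t)$.

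For the denominator, we marginalize over $O$ using total probability conditioned only on $h_t$:
\[
P_\pi(Z=z\mid h_t)=q_0\,P_\pi(O=0\mid h_t)+q_1\,P_\pi(O=1\mid h_t)=q_1\,P_\pi(O=1\mid h_t).
\]
Equivalently, we average \eqref{eq:affine} over $\hat{y}_t\sim\pi(\cdot\mid h_t)$, using $\mathbb{E}_{\pi}[p(\hat{Y}_t;h_t)]=P_\pi(O=1\mid h_t)$. Taking the ratio, $q_1$ cancels, leaving $\log p(\hat{y}_t;h_t)-\log P_\pi(O=1\mid h_t)$.

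The final step is to match this against $r_t^{\mathrm{cf}}$. By \eqref{eq:rcf-pmi}, which relies on identification conditions (i)--(iii), and Bayes' rule,
\[
\frac{\pi^+(\hat{y}_t\mid h_t)}{\pi(\hat{y}_t\mid h_t)}=\frac{p(\hat{y}_t;h_t)}{P_\pi(O=1\mid h_t)}.
\]
This is exactly the expression obtained above, which proves the identity. Here the do-intervention has already been reduced to conditioning, precisely as in the proof sketch of Prop.~\ref{prop:rank-preserve}.

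The main obstacle is well-definedness. We need $q_1(z,h_t)>0$ so that the cancellation is legitimate and the logs are finite. This follows because positive informativeness together with $q_0=0$ forces $q_1>0$, and also because $z$ is actually observed. We also need $P_\pi(O=1\mid h_t)>0$, which is implied by positivity (iii) together with $q_1>0$ whenever $z$ has positive probability. Finally, for candidates with $p(\hat{y}_t;h_t)=0$ both sides equal $-\infty$, and we adopt that convention so that the equality still holds.
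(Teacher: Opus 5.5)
Your proposal is correct and follows the paper's proof: you substitute $q_0=0$ into \eqref{eq:affine}, obtain $P_\pi(Z=z\mid h_t)=q_1(z,h_t)\,P_\pi(O=1\mid h_t)$ for the denominator, cancel $q_1$, and match the remaining $\log p(\hat{y}_t;h_t)-\log P_\pi(O=1\mid h_t)$ to $r_t^{\mathrm{cf}}$ via \eqref{eq:rcf-pmi}. Your main route for the denominator, total probability over $O$, differs from the paper's marginalization over $\hat{Y}_t$ but is equally valid, and your well-definedness remarks are harmless; note, though, that $P_\pi(O=1\mid h_t)>0$ follows from $P_\pi(Z=z\mid h_t)>0$ and this factorization rather than from positivity (iii).
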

\begin{proof}
  With $q_0 = 0$, \eqref{eq:affine} reduces to $P_\pi(Z{=}z \mid \hat{Y}_t = \hat{y}_t, h_t) = q_1(z, h_t)\, p(\hat{y}_t; h_t)$. Marginalizing over $\hat{Y}_t$ gives $P_\pi(Z{=}z \mid h_t) = q_1(z, h_t)\, P_\pi(O{=}1 \mid h_t)$. Substituting both into the log-ratio definition $r_t^{z}(\hat{y}_t) = \log P_\pi(Z{=}z \mid \hat{Y}_t = \hat{y}_t, h_t) - \log P_\pi(Z{=}z \mid h_t)$ cancels $\log q_1(z, h_t)$ and leaves $\log p(\hat{y}_t; h_t) - \log P_\pi(O{=}1 \mid h_t)$, which is exactly $r_t^{\mathrm{cf}}(\hat{y}_t)$.
\end{proof}

\paragraph{Binary verifier feedback: a theoretically tight special case.}
A hypothetical tight case of Prop.~\ref{prop:rank-preserve}--Cor.~\ref{cor:one-sided} arises when the feedback variable is \emph{binary verifier feedback}: $Z = O \in \{0, 1\}$, with standard sign conventions making the observed feedback $z = 1$ only when $O = 1$. In that case $q_0(z{=}1, h_t) = P(Z{=}1 \mid O{=}0, h_t) = 0$ by construction, so Corollary~\ref{cor:one-sided} applies directly and, for every $h_t$,
\begin{equation*}
  r_t^{z}(\hat{y}_t;\, h_t) \;=\; r_t^{\mathrm{cf}}(\hat{y}_t;\, h_t),
\end{equation*}
i.e., the feedback-conditioned reward coincides identically with the ideal interventional credit. We flag this case only to characterize the boundary of Prop.~\ref{prop:rank-preserve}; \emph{none} of the feedback specifications in our main experiments---test input-output pairs for LiveCodeBench, expected tool calls for ToolAlpaca, and ground-truth answers for SciKnowEval (\S\ref{sec:setup})---take this exact binary-verifier form, so this tight coincidence does not apply to them. The empirical claims of the paper rest on the input-conditional surrogate view of \S\ref{sec:info-theory}, not on this special case; tasks with softer or noisier correctness signals (reward-model scalars, partial-credit rubrics) retain only the rank-preservation statement of Prop.~\ref{prop:rank-preserve}, with a non-trivial $q_0 > 0$.

\paragraph{Scope and limitations.}
The claims above rest on sequential ignorability as a modeling assumption for autoregressive LLMs---$h_t$ contains all observed state, but unobserved reasoning in hidden activations could in principle confound $\hat{Y}_t$ and $O$---and on Assumption~\ref{asm:osf}, which fits outcome-coded feedback (MCQ letter correctness, tool-sequence match, code-test pass) but weakens when $z$ carries trajectory-specific detail beyond $O$. Within these conditions, we establish a local rank-preservation statement; uniform bounded gap, globally optimal policy preservation, and $L_\infty$-tightness of \credit{} over raw SD remain out of scope. The main theory stays informational; this appendix is a compatible interventional reading. A rigorous empirical test of Prop.~\ref{prop:rank-preserve} would require $z$ as a noisy function of a final outcome (e.g., binary verifier correctness on a code or tool-use task), which we leave to future work.

%% ============================================================
%% G. ADDITIONAL TOKEN-LEVEL REWARD VISUALIZATIONS
%% ============================================================
\section{Additional Token-Level Reward Visualizations}
\label{app:visualizations}

We provide additional examples complementing Figure~\ref{fig:token-reward} in the main text. Each figure shows, for a single response to a LiveCodeBench coding problem: (a)~the problem statement, (b)~the self-distillation reward $\Delta V_t(x)$, and (c)~the input-specific advantage $S_t(x)$ after \credit's contrastive debiasing. Colors indicate the sign of the per-token component: \textcolor{red}{red} = positive (reinforce), \textcolor{blue}{blue} = negative (suppress); since $S_t(x)$ is one additive component of the full advantage, the color reflects the sign of $S_t(x)$ rather than the final advantage. All values are group-normalized.

\begin{figure}[h]
  \centering
  \includegraphics[width=\textwidth]{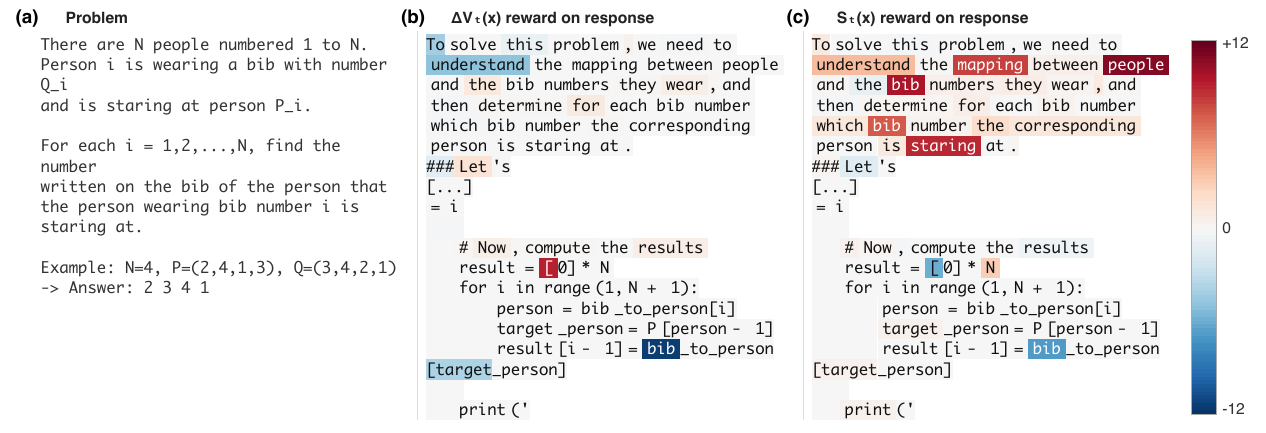}
  \caption{Index-following problem (output $Q[P[i]]$ for each $i$, given arrays $P$ and $Q$). $\Delta V_t$ is broadly positive; $S_t$ concentrates on problem-specific entities (\texttt{people}, \texttt{bib}, \texttt{staring}, \texttt{mapping}) and the indirection vocabulary the response invokes, while generic boilerplate is suppressed.}
  \label{fig:app-permutation}
\end{figure}

\begin{figure}[h]
  \centering
  \includegraphics[width=\textwidth]{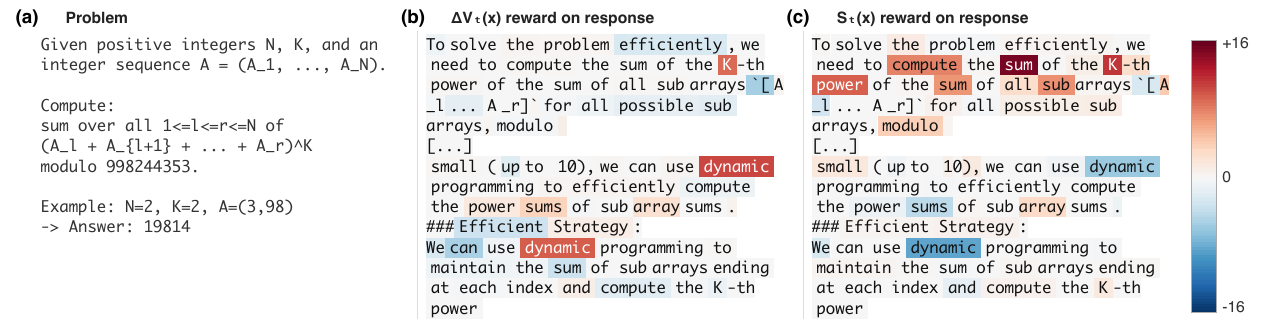}
  \caption{Sum of $K$-th powers of all subarray sums modulo a prime. $S_t$ reinforces problem-specific vocabulary (\texttt{sum}, \texttt{K}, \texttt{power}, \texttt{modulo}, \texttt{-th}) and suppresses algorithmic templates the model attempted but that do not fit the problem (\texttt{dynamic} programming, \texttt{sliding} window).}
  \label{fig:app-subarray-kpower}
\end{figure}

\begin{figure}[h]
  \centering
  \includegraphics[width=\textwidth]{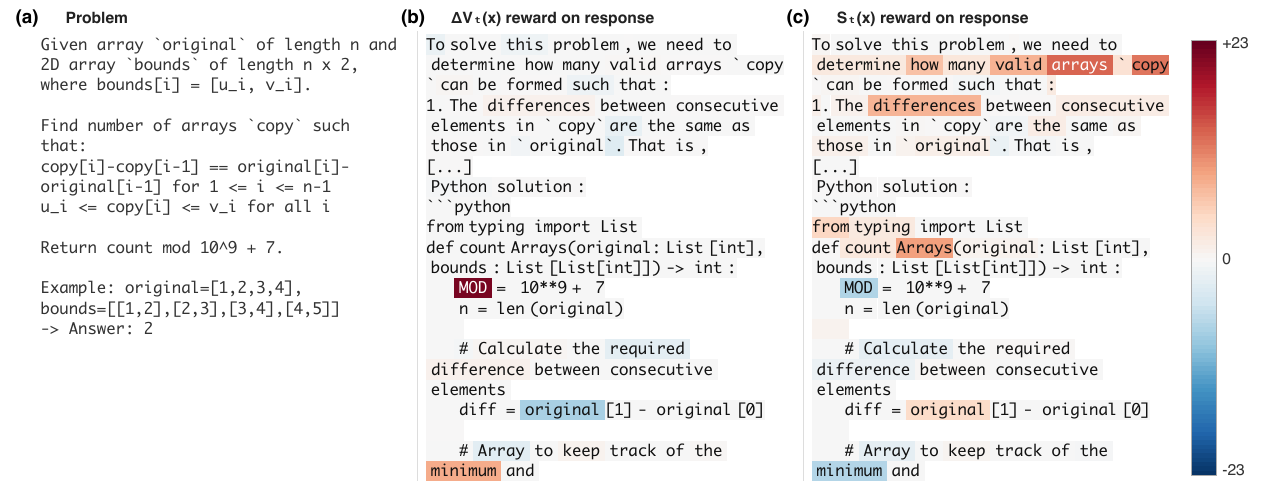}
  \caption{Counting arrays \texttt{copy} whose consecutive differences match a given array and lie within per-position bounds. $S_t$ concentrates on the problem-specific entities (\texttt{copy}, \texttt{original}, \texttt{differences}, \texttt{arrays}) and the counting framing (\texttt{how many}, \texttt{valid}, \texttt{count}, \texttt{determine}); tokens belonging to misframings (\texttt{minimum}, \texttt{MOD} used out of context) are suppressed.}
  \label{fig:app-array-original}
\end{figure}

\begin{figure}[h]
  \centering
  \includegraphics[width=\textwidth]{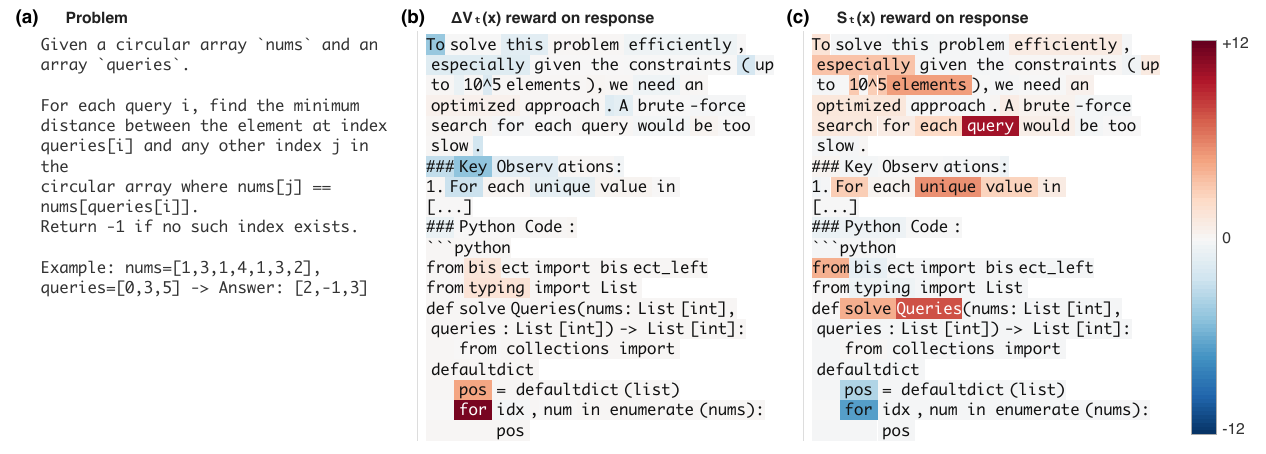}
  \caption{Per-query minimum-distance lookup on a circular array. $\Delta V_t$ is near-uniform; $S_t$ selectively reinforces tokens about the per-query data structure (\texttt{query}, \texttt{Queries}, \texttt{unique}, \texttt{elements}, \texttt{value}) needed for the lookup, while generic control-flow tokens (\texttt{for}, \texttt{pos}) carrying little input-specific information are suppressed.}
  \label{fig:app-circular}
\end{figure}

\begin{figure}[h]
  \centering
  \includegraphics[width=\textwidth]{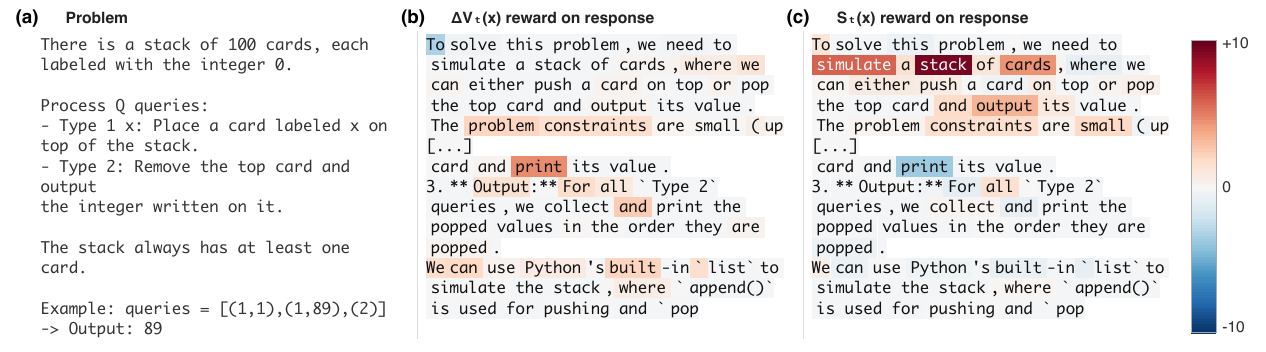}
  \caption{Stack simulation with push and pop queries on 100 cards. $S_t$ concentrates on problem-specific vocabulary (\texttt{stack}, \texttt{simulate}, \texttt{cards}, \texttt{push}) and the operation framing, while tokens unrelated to the stack abstraction (\texttt{print}, \texttt{built}, \texttt{where}) are suppressed.}
  \label{fig:app-stack}
\end{figure}

\begin{figure}[h]
  \centering
  \includegraphics[width=\textwidth]{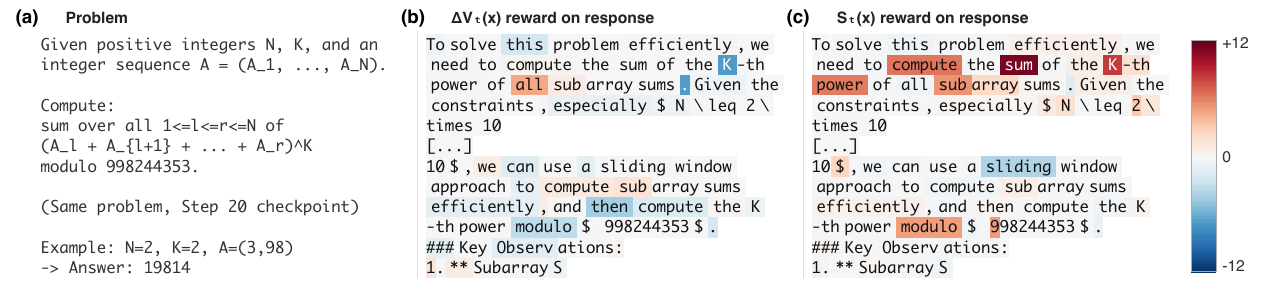}
  \caption{Same problem as Figure~\ref{fig:app-subarray-kpower}, evaluated at training step~20 (earlier checkpoint). The input-specific signal $S_t$ is weaker and less concentrated than at the later checkpoint, suggesting that the model's input-specific signal sharpens during training.}
  \label{fig:app-subarray-step20}
\end{figure}

\begin{figure}[h]
  \centering
  \includegraphics[width=\textwidth]{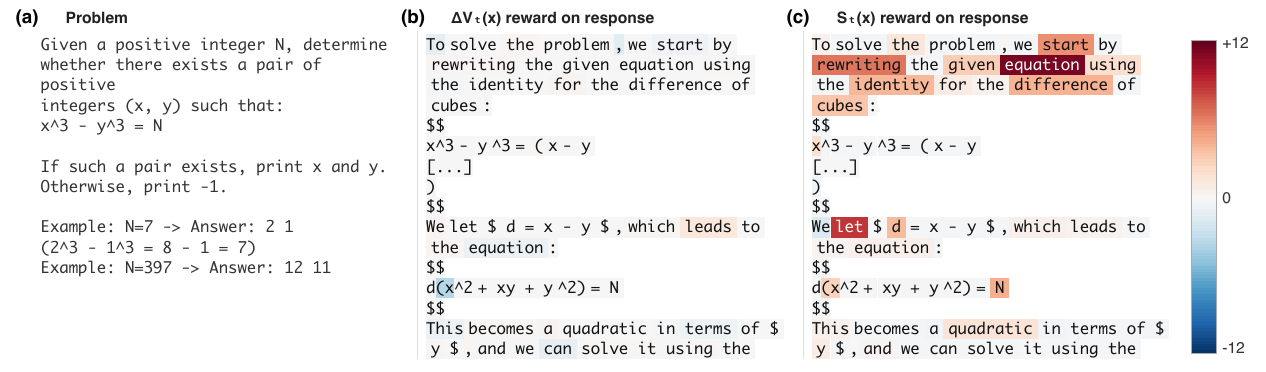}
  \caption{Diophantine problem $x^3 - y^3 = N$ over positive integers (algebraic, not geometric). $S_t$ reinforces the algebraic-manipulation tokens (\texttt{equation}, \texttt{rewriting}, \texttt{identity}, \texttt{difference}, \texttt{cubes}, \texttt{quadratic}) the response uses to apply the difference-of-cubes factorization; generic discourse tokens are suppressed.}
  \label{fig:app-cubes-step50}
\end{figure}
\FloatBarrier

%%%%%%%%%%%%%%%%%%%%%%%%%%%%%%%%%%%%%%%%%%%%%%%%%%%%%%%%%%%%

% \newpage
% \input{checklist.tex}  % NeurIPS checklist; not included in arXiv preprint

\end{document}